\newcommand{\alice}[0]{\textit{Alice}}
\newcommand{\bob}[0]{\textit{Bob}}
\newcommand{\eve}[0]{\textit{Eve}}
\newcommand{\sm}[0]{\textit{sm}}
\newcommand{\fr}[0]{\textit{fr}}
\newtheorem{theorem}{Theorem}
\newtheorem{lemma}{Lemma}
\newtheorem{proposition}[theorem]{Proposition}
\newtheorem{definition}{Definition}
\newtheorem{remark}[theorem]{Remark}
\newtheorem{example}[theorem]{Example}
\newif\ifappendix
\newif\ifmaintext
\begin{document}

% If your paper is accepted and the title of your paper is very long,
% the style will print as headings an error message. Use the following
% command to supply a shorter title of your paper so that it can be
% used as headings.
%
%\runningtitle{I use this title instead because the last one was very long}

% If your paper is accepted and the number of authors is large, the
% style will print as headings an error message. Use the following
% command to supply a shorter version of the authors names so that
% they can be used as headings (for example, use only the surnames)
%
%\runningauthor{Surname 1, Surname 2, Surname 3, ...., Surname n}

\twocolumn[

\ifmaintext
\aistatstitle{Lifted Weight Learning of Markov Logic Networks Revisited}
\else
\aistatstitle{Lifted Weight Learning of Markov Logic Networks Revisited (Appendix)}
\fi

\aistatsauthor{ Ond\v{r}ej Ku\v{z}elka \And Vyacheslav Kungurtsev }

\aistatsaddress{Czech Technical University in Prague \\ Dept of Computer Science, KU Leuven \And Czech Technical University in Prague} ]

\ifmaintext

\begin{abstract}
We study lifted weight learning of Markov logic networks. We show that there is an algorithm for maximum-likelihood learning of 2-variable Markov logic networks which runs in time polynomial in the domain size. Our results are based on existing lifted-inference algorithms and recent algorithmic results on computing maximum entropy distributions.
\end{abstract}

\section{INTRODUCTION}

Statistical Relational Learning \cite{getoor2007introduction} (SRL) is concerned with learning probabilistic models from relational data. Markov Logic Networks \cite{Richardson2006} (MLNs) are among the most prominent SRL systems. An MLN is given by a set of weighted first-order logic formulas and a domain $\Delta$. Generative weight learning of MLNs is typically performed using  maximum-likelihood estimation. Unfortunately, maximizing likelihood of MLNs is generally intractable. Therefore, in practice, one often resorts to heuristic approximations. Another option besides using approximations is to restrict the class of MLNs to those for which inference can be performed efficiently. This has been studied in the subarea of SRL called {\em lifted inference} \cite{braz2005lifted}. In particular, it has been shown in \cite{van2011lifted,van2014skolemization} that probabilistic inference in MLNs with formulas containing at most 2 logical variables can be performed in time polynomial in the size of the given domain $\Delta$. This has been exploited in \cite{van.haaren.mlj} for maximum-likelihood learning of MLNs, 
suggesting tractable learning of 2-variable MLNs could be possible. However, although it showed that gradients of log-likelihood can be computed efficiently, it did not provide a bound on the total runtime of the learning algorithm, specifically, because this bound was missing a guarantee on the number of iterations of the optimization algorithm.

In this paper, we complete the work of \cite{van.haaren.mlj} by answering whether maximum-likelihood learning of MLNs can be done in time polynomial in the size of the domain for 2-variable MLNs. We give a positive answer to this question (Theorem \ref{thm:main}), under consideration of the dependence of the runtime bounds on how extreme the statistics of the training data are. To arrive at this positive result, we need to combine results from three streams of research: (i) lifted inference \cite{van2011lifted,beame2015symmetric}, (ii) links between maximum-likelihood learning of MLNs and relational marginal problems \cite{kuzelka2018relational}, and (iii) algorithmic results on maximum-entropy distributions \cite{singh2014entropy}. 
We should note here that our results are mostly of theoretical interest. Making the algorithms described in this paper practical would be potential future research.

The rest of the paper is structured as follows. After covering the necessary background material in Section \ref{sec:background}, we introduce the concept of interiority in relational marginal polytopes in Section \ref{sec:polytopes}. We then state our main technical results in Section \ref{sec:main}. Then, in Sections \ref{sec:2var_polytopes}, \ref{sec:bounding_box}, we work towards the proof of the main results which we finish in Sections \ref{sec:proof} and \ref{sec:proof:corollary}. The paper is concluded in Section~\ref{sec:conclusions}.

%Many popular SRL frameworks, such as Markov Logic Networks (MLNs, Richardson and Domingos 2006), use weighted logical formulas to encode statistical regularities that hold for the considered prob- lem. Typically, the maximum (pseudo-)likelihood weights of the formulas are estimated from training data, which is usually a single large example (e.g. a social network). This is problematic for two reasons. First, the weights that are learned from this single training example are in general not optimal for examples of different sizes (Jain, Kirchlechner, and Beetz 2007). This turns out to be a fundamental problem, which cannot simply be solved by rescaling the weights (Shalizi and Rinaldo 2013). Second, without making further assumptions, it is difficult to provide any statistical guarantees about the learned weights.

\section{BACKGROUND}\label{sec:background}

\subsection{First-Order Logic}

We consider a function-free first-order logic language $\mathcal{L}$, built from a set of constants $\textit{Const}$, variables $\textit{Var}$ and predicates $\textit{Rel} = \bigcup_i \textit{Rel}_i$, where $\textit{Rel}_i$ contains the predicates of arity $i$. We assume an untyped language (all our results can be straightforwardly generalized to the typed case). For $a_1,...,a_k \in \textit{Const}\cup \textit{Var}$ and $R \in \textit{Rel}_k$, we call $R(a_1,...,a_k)$ an atom.  If $a_1,..,a_k\in \textit{Const}$, this atom is called ground. A literal is an atom or its negation. 
We use $\textit{Vars}(\alpha)$ to denote the variables that appear in a formula $\alpha$.
The formula $\alpha_0$ is called a grounding of $\alpha$ if $\alpha_0$ can be obtained by replacing each variable in $\alpha$ with a constant from $\textit{Const}$. 
A formula is called closed if all variables are bound by a quantifier. A variable in a formula is called free if it is not bound by a quantifier. A formula with no free variables is called a sentence. A formula is called quantifier-free if all variables in it are free.
A possible world $\omega$ is defined as a set of ground atoms. 
A substitution is a mapping from variables to terms. An injective substitution is a substitution which does not map any two variables to the same variable or constant. 

%{\textcolor{blue}{What is a 'term' by these definitions?}}

\subsection{Markov Logic Networks}\label{sec:mlns}

%\todo{Define $N(\alpha,\omega)$ only for injective groundings}

A Markov logic network \cite{Richardson2006} (MLN) is a set of weighted first-order logic formulas $(\alpha,w)$, where $w\in \mathbb{R}$ and $\alpha$ is a function-free and quantifier-free first-order formula. The semantics are defined w.r.t.\ the groundings of the first-order formulas, relative to some finite set of constants $\Delta$, called the domain. An MLN is classically seen as a template that defines a Markov random field (in Section \ref{sec:rel_marg_problems}, we describe another way of interpreting MLNs--as solutions to max-entropy marginal problems). Specifically, an MLN $\Phi$ induces the following probability distribution on the set of possible worlds $\omega \in \Omega$: 
$
p_{\Phi}(\omega) = \frac{1}{Z} \exp \left(\sum_{(\alpha,w) \in \Phi} w \cdot N(\alpha,\omega)\right),
$
where $N(\alpha, \omega)$ is the number of injective\footnote{Normally, MLNs are not defined with injective groundings. However, working with injective groundings turns out to be more convenient and equally expressive \cite{kuzelka2018relational,DBLP:conf/aaai/BuchmanP15}.} groundings of $\alpha$ satisfied in $\omega$, and $Z$ is a normalization constant to ensure that $p_{\Phi}$ is a  probability distribution.

\subsection{Ellipsoid Algorithm}\label{sec:ellipsoid}

In this section we briefly describe the main properties of the ellipsoid algorithm for convex optimization \cite{boyd2004convex}; the exposition is based on \cite{singh2014entropy}. Consider an arbitrary convex optimization problem,
\[
\begin{array}{rl}
\max_{\lambda\in\mathbb{R}^m} & g(\lambda) \\
\text{s.t.} & h_i(\lambda)=0,\,\forall i\in\{1,...,k\}
\end{array}
\]
where $g$ is concave and $h_i$ are all affine. Assume that $g$ and $h_i$ are differentiable
everywhere, and furthermore, there exists a \emph{strong first order oracle} for $g$ which, given
$\lambda$, outputs $g(\lambda)$ and $\nabla g(\lambda)$ and that we can project $\nabla g(\lambda)$
onto the affine space defined by $K=\{\lambda: h_i(\lambda)=0,\,\forall i\in\{1,...,k\}\}$.

The ellipsoid algorithm will be used in the proofs in this paper as it satisfies the following property,
\begin{theorem}\cite[Theorem 2.13]{singh2014entropy}\label{th:ellipsoid}
Given any $\beta,R>0$, there exists an algorithm, namely the \emph{ellipsoid algorithm} which,
given a strong first-order oracle for $g$, returns a $\widehat\lambda$ such that,
\begin{multline*}
    g(\widehat\lambda) \ge \max_{\lambda\in K,\|\lambda\|_\infty\le R} g(\lambda) \\\qquad+
\beta\left(\min_{\lambda\in K,\|\lambda\|_\infty\le R} g(\lambda)-\max_{\lambda\in K,\|\lambda\|_\infty\le R} g(\lambda)\right)
\end{multline*}
and the number of calls to the strong first-order oracle is bounded by a polynomial in $m$, $\log R$
and $\log(1/\beta)$.
\end{theorem}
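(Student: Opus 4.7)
The plan is to adapt the classical ellipsoid method for convex optimization to the bounded feasible region $\Omega_R := K \cap \{\lambda : \|\lambda\|_\infty \le R\}$, exploiting concavity of $g$ and the strong first-order oracle to generate separating hyperplanes, and then to track the best iterate seen so far. Because $g$ is concave, every superlevel set $\{\lambda : g(\lambda) \ge t\}$ is convex, so the method is essentially doing a binary-search-like shrinkage on the true optimum value using gradient cuts.

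Concretely, I would initialize the ellipsoid as the ball $B(0, R\sqrt{m})$ (which contains the box $\|\lambda\|_\infty \le R$), restricted to the affine subspace $K$. The equality constraints can be eliminated either by reparametrizing $K$ via an orthonormal basis, or by using the assumption that we can project $\nabla g$ onto $K$ and running the ellipsoid intrinsically on the $(m-k)$-dimensional affine hull; I would take whichever is notationally cleaner. At iteration $t$, let $\lambda_t$ be the current center; query the oracle for $g(\lambda_t)$ and $\nabla g(\lambda_t)$; if $\|\lambda_t\|_\infty > R$ cut with the violated box face, otherwise use the concavity cut $\{\lambda : \langle \nabla g(\lambda_t), \lambda - \lambda_t \rangle \le 0\}$, which is valid because any $\lambda^\star$ with $g(\lambda^\star) > g(\lambda_t)$ lies on the positive side. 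The returned $\widehat\lambda$ is the best feasible iterate.

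For the quantitative guarantee, I would use the standard volume reduction $\mathrm{vol}(E_{t+1}) \le \mathrm{vol}(E_t)\,\exp(-1/(2m))$, so after $T$ iterations $\mathrm{vol}(E_T) \le \mathrm{vol}(E_0)\,\exp(-T/(2m))$. Write $g^\star = \max_{\Omega_R} g$ and $g_\star = \min_{\Omega_R} g$, and suppose for contradiction that $g(\widehat\lambda) < g^\star + \beta(g_\star - g^\star)$, i.e.\ $g(\widehat\lambda) < g^\star - \beta(g^\star - g_\star)$. Then the convex superlevel set $S := \{\lambda \in \Omega_R : g(\lambda) \ge g^\star - \tfrac{\beta}{2}(g^\star - g_\star)\}$ is nonempty (it contains a maximizer $\lambda^\star$), and by concavity of $g$ the set $(1-\beta/2)\lambda^\star + (\beta/2)\Omega_R$ is contained in $S$, giving $\mathrm{vol}(S) \ge (\beta/2)^{m}\,\mathrm{vol}(\Omega_R)$. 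Every point in $S$ has $g$-value strictly larger than $g(\lambda_t)$ for all iterates, so no gradient cut can have removed it, meaning $S \subseteq E_t$ for all $t$. Choosing $T$ polynomial in $m$ and $\log(1/\beta)$ (and $\log R$ to bound $\mathrm{vol}(E_0)/\mathrm{vol}(\Omega_R)$) drives $\mathrm{vol}(E_T)$ below $\mathrm{vol}(S)$, a contradiction.

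The main obstacle I expect is the volume lower bound on $S$: the bare ellipsoid analysis handles feasibility, but converting cut-survival into an \emph{additive-in-value} guarantee requires exactly the concavity-based scaling argument above, and one has to be careful that the $\beta^m$ factor does not spoil the polynomial dependence on $\log(1/\beta)$ (it does not, because taking $\log$ produces an $m\log(2/\beta)$ term, which is still polynomial). A secondary nuisance is the equality constraints: one must ensure that the projected gradient cuts still yield genuine half-space cuts within $K$, which is where the assumed projection oracle is used. Once these pieces are assembled, the iteration bound and the oracle-call count are both polynomial in $m$, $\log R$, and $\log(1/\beta)$, matching the claim.
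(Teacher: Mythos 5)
First, a point of calibration: the paper does not prove this statement at all --- it is imported verbatim as Theorem 2.13 of the cited work of Singh and Vishnoi, so there is no in-paper proof to compare against. Your argument is, in substance, the standard Nemirovski--Yudin analysis of the ellipsoid method for concave maximization, and it is the same style of argument as in the cited source: gradient cuts preserve every strictly better point (by concavity, $g(y)>g(\lambda_t)$ forces $\langle\nabla g(\lambda_t),y-\lambda_t\rangle>0$), feasibility cuts preserve $\Omega_R$, the volume shrinks geometrically, and the additive-in-value guarantee comes from lower-bounding the volume of the superlevel set $S$ by the shrunken copy $(1-\beta/2)\lambda^\star+(\beta/2)\Omega_R$. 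The resulting $(\beta/2)^{m}$ volume bound is exactly the right device, and you correctly note that it costs only an $m\log(2/\beta)$ term after taking logarithms, so polynomiality in $\log(1/\beta)$ survives.

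Two caveats. (i) The per-step volume ratio is $e^{-1/(2(m+1))}$ rather than $e^{-1/(2m)}$; this is harmless. (ii) More substantively, your final step needs $\log\bigl(\mathrm{vol}(E_0)/\mathrm{vol}(\Omega_R)\bigr)$ to be polynomial in $m$ and $\log R$, where both volumes are measured inside the affine hull of $K$. For a general affine subspace this can fail: $K$ may clip the box $\|\lambda\|_\infty\le R$ in an arbitrarily thin sliver near a corner, making $\mathrm{vol}(\Omega_R)$ arbitrarily small relative to $\mathrm{vol}(E_0\cap K)$, and then the iteration count is not controlled by $m$, $\log R$ and $\log(1/\beta)$ alone. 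The statement is safe in the regime in which it is actually invoked here: $K=\{\lambda: A^{=}\lambda=0\}$ is a linear subspace containing the origin, so $\Omega_R\supseteq K\cap B(0,R)$ is a full-dimensional ball within the affine hull and the volume ratio is at most $m^{m/2}$, contributing only $O(m\log m)$. You should make this centering assumption on $K$ explicit, since without it the claimed oracle-call bound does not follow from your volume argument.
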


% I think we will not need the things from below
% Although the algorithmic details of the ellipsoid algorithm are beyond the scope of the paper,
% we will use a technical result related to the ellipsoid algorithm. Recall
% that an ellipsoid $E$ can be described as,
% \[
% E=E(A,a)\triangleq \{x\in\mathbb{R}^m:(x-a)^T A^{-1} (x-a)\le 1\},
% \]
% where $a\in\mathbb{R}^m$ and $A\in\mathbb{R}^{m\times m}$ is positive definite. Denote
% by $\text{vol}(E)$ the volume of the ellipsoid, which we shall abuse notation in considering,
% as applicable, the volume relative to affine spaces dimension lower than $m$. The technical
% result is as follows,
% \begin{theorem}\cite[Theorem 2.14]{singh2014entropy}\label{th:volumeellipse}
% Given an ellipsoid $E(A,a)$ and a half-space $H=\{x:\langle c,x\rangle \le \langle c,a\rangle\}$
% passing through $a$, there exists an ellispod $E' \supseteq E \cap H$ such that 
% \[
% \frac{\text{vol}(E')}{\text{vol}(E)}\le e^{-\frac{1}{m}}
% \]
% \end{theorem}
\subsection{Relational Marginal Problems}\label{sec:rel_marg_problems}

In this section we describe the relationship between MLN weight learning using maximum likelihood estimation and so-called {\em relational marginal problems} which were studied in \cite{kuzelka2018relational}. 

We start by defining {\em formula statistics} which are closely related to {\em random-substitution semantics} \cite{bacchus_halpern_koller,schulte}. In our case, the formula statistics are just rescaled numbers of true groundings of a formula (defined in Section \ref{sec:mlns}), where the scaling depends on the number of variables in the formula.

\begin{definition}[Formula statistics]
Let $\alpha$ be a quantifier-free first-order logic formula with $k$ variables $\{x_1,\dots,x_k\}$. We define its formula statistic w.r.t.\ a possible world $\omega$ as:
$$Q_\omega(\alpha) = \left( \begin{array}{c} |\Delta| \\ k \end{array} \right)^{-1} \cdot (k!)^{-1} \cdot N(\alpha, \omega).$$ 
% Additionally, for a distribution $P(\omega)$ on possible worlds from some set $\Omega$ we define
% $$Q[\alpha] = \sum_{\omega \in \Omega} P(\omega) \cdot Q_\omega(\alpha)$$
% (i.e. $Q[\alpha_i]$ is just expected value of the respective statistic).
\end{definition}

\begin{remark}
When $\alpha$ does not contain any variables, e.g.\ when $\alpha = \textit{smokes}(\textit{Alice})$, then $Q_\omega(\alpha) \in \{ 0,1\}$.
\end{remark}

Intuitively, for a given formula $\alpha$ and a possible world $\omega$, the formula statistic $Q_\omega(\alpha)$ is the probability that the ground formula $\alpha\vartheta$ is true where $\vartheta$ is a grounding injective substitution of $\alpha$'s free variables picked from all such substitutions uniformly at random.\footnote{This is how formula statistics relate to random substitution semantics \cite{bacchus_halpern_koller,schulte}.}

\begin{example}\label{ex1}
Let $\omega = \{ \fr(\alice,\bob),$ $\fr(\bob,\alice),$ $\fr(\bob,\allowbreak\eve),$ $\fr(\eve,\bob),$ $\sm(\alice) \}$ and $\Delta = \{\alice, \bob, \eve \}$, i.e.\ the only smoker is $\alice$ and the friendship structure is:
%\begin{center}
\resizebox{0.215\textwidth}{!}{
\tikzset{
main node/.style={ellipse,fill=white!11,draw,minimum size=0.3cm,inner sep=0pt},
other node/.style={rectangle,fill=white!11,minimum size=0.3cm,inner sep=0pt},
}
\tikzset{edge/.style = {->,> = latex'}}
\begin{tikzpicture}

\node[main node] (1) {Alice};
\node[main node] (2) [right = 0.5cm of 1] {Bob};
\node[main node] (3) [right = 0.5cm of 2] {Eve};
%\node[other node] (4) [left = 0.2cm of 1] {sm(alice)};

\draw[edge] (1) [bend right] to (2);
\draw[edge] (2) [bend right] to (1);
\draw[edge] (3) [bend right] to (2);
\draw[edge] (2) [bend right] to (3);
\end{tikzpicture}}
%\end{center}
Let 
$\alpha = \fr(x,y) \Rightarrow \sm(y).$
We then get $Q_{\omega}(\alpha) = \frac{1}{2}$ (of the 6 possible injective substitutions $\vartheta$ of $\alpha$'s variables, three make $\alpha\vartheta$ true in $\omega$).
\end{example}

\begin{remark}
Let us have a set $\Omega$ of possible worlds over a domain $\Delta$. MLNs over $\Omega$, given by a set of weighted formulas $\Phi = \{(\alpha_1,w_1), \dots, (\alpha_l,w_l) \}$, can be re-defined using formula statistics as:
$$p_\Phi(\omega) = \frac{1}{Z} \exp{\left( \sum_{(\alpha_i,w_i) \in \Phi} w_i \cdot Q_\omega(\alpha_i) \right)}.$$
For possible worlds over a domain of fixed size, the only difference is the scaling factor in the definition of formula statistics, which is fixed for each formula and fixed domain size, hence, as a result the only difference is that the weights need to be scaled as well. In what follows when we refer to MLNs we will mean this representation unless stated otherwise.
\end{remark}

% \begin{remark}
% The normalization factor in the definition of sentences, which makes values of the formula statistics to fall in the interval $[0;1]$, does not by itself solve the problem of modeling possible worlds over varying domain sizes for Markov logic networks with fixed domain-independent weights. Recomputing the weights using the relational marginal formulation, that we describe in turn, sidesteps this problem. We refer to \cite{kuzelka2018relational,jaeger_schulte} for more detailed discussions.
% \end{remark}

Next we use formula statistics to define a maximum entropy distribution over a given set of possible worlds $\Omega$. Assuming that we know the values $\theta_1, \dots, \theta_l$ that the formula statistics of the given formulas $\alpha_1, \dots, \alpha_l$ should have in expectation (which we might have, for instance, estimated from given training data), we can define the following convex optimization problem encoding the maximum entropy problem.

\paragraph{Relational Marginal Problem (Formulation):}

\begin{align}
    \min_{\{ P_\omega \colon \omega \in \Omega \}}  \sum_{\omega\in \Omega} P_{\omega} \log{{P_\omega}} \quad \textit{ s.t.}\label{eq:maximum_entropy_criterion} \\
    \forall i = 1,\dots,l: \sum_{\omega \in \Omega} P_\omega \cdot Q_\omega(\alpha_i) = \theta_i\label{eq:maxent_marginal_constraints} \\
\forall \omega \in \Omega : P_{\omega} \geq 0, \sum_{\omega \in \Omega} P_{\Omega} = 1\label{eq:maximum_entropy_cnormalization}
\end{align}

\noindent Here, $P_\omega$'s are the decision variables of the problem, each representing probability of one possible world $\omega \in \Omega$. The first line (\ref{eq:maximum_entropy_criterion}) is the maximum entropy criterion (represented here as minimization of negative entropy), (\ref{eq:maxent_marginal_constraints}) are constraints given by the statistics and (\ref{eq:maximum_entropy_cnormalization}) are normalization constraints for the probability distribution.

Assuming there exists a feasible solution satisfying $\forall \omega : P_\omega > 0$, the optimal solution of the above maximum entropy problem is an MLN 
\begin{equation}
    P_\omega = p_{\Phi}(\omega) = \frac{1}{Z} \exp{\left( \sum_{(\alpha_i,\lambda_i) \in \Phi} \lambda_i \cdot Q_\omega(\alpha_i) \right)} \label{eq:mln-as-sol}
\end{equation}
where the parameters $\mathbf{\lambda} = (\lambda_1,\dots,\lambda_l)$ are obtained by maximizing the dual criterion
\begin{equation}
  L(\mathbf{\lambda}) = \sum_{\alpha_i} \lambda_i \theta_i - \log{\sum_{\omega \in \Omega} e^{\sum_{\alpha_i} \lambda_i Q_\omega(\alpha_i)}}\label{eq:dual}  
\end{equation}
This dual criterion also happens to be equivalent to the log-likelihood of the MLN (\ref{eq:mln-as-sol}) w.r.t.\ a (possibly fictitious) training example $\widehat{\omega}$ that has to be over the same domain $\Delta$ and that satisfies $Q_{\widehat{\omega}}(\alpha_i) = \theta_i$ for all the formula statistics.

\begin{remark}
Due to the above duality, if we can show that we can solve relational marginal problems efficiently, it will follow as a corollary that we can solve maximum likelihood estimation in MLNs efficiently and vice versa. %We will therefore mostly focus on the former problem in this paper.
\end{remark}

\begin{remark}
Above, we have used the assumption that there exists a feasible solution where probability of every possible world is positive. This does not hurt generality of our discussion because we can always remove the possible worlds $\omega$ that, by the virtue of the given constraints, must have zero probability in any feasible solution from the set $\Omega$. In most cases, $\Omega$ is not given explicitly but by means of a first-order logic theory (that describes which possible worlds are ``possible''), so it is enough to add suitable first-order sentences to this theory.
\end{remark}

\subsection{Inference Using Weighted Model Counting}\label{sec:wfomc}

To maximize the dual criterion (\ref{eq:dual}) we will need to be able to compute its gradient. For the partial derivatives of (\ref{eq:dual}), we have
\begin{multline}
    \frac{\partial L}{\partial \lambda_i} = \theta_i - \frac{\sum_{\omega \in \Omega} Q_\omega(\alpha_i) \cdot e^{\sum_{\alpha_i} \lambda_i Q_\omega(\alpha_i)}}{\sum_{\omega \in \Omega} e^{\sum_{\alpha_j} \lambda_j Q_\omega(\alpha_j)}} \\
    = \theta_i - \mathbb{E}[Q_\omega(\alpha_i)] \label{eq:gradient}
\end{multline}
It follows that, in order to compute the gradient, we will also need to be able to compute the partition function $Z = \sum_{\omega \in \Omega} e^{\sum_{\alpha_j} \lambda_j Q_\omega(\alpha_j)}$. 
Computation of the partition function $Z$ can be converted to a {\em first-order weighted model counting problem (WFOMC)}.

\begin{definition}[WFOMC \cite{van2011lifted}]
Let $w(P)$ and $\overline{w}(P)$ be functions from predicates to real numbers (we call $w$ and $\overline{w}$ {\em weight functions}) and let $\Phi$ be a first-order theory. Then $\operatorname{WFOMC}(\Phi,w,\overline{w}) =$
$$
     = \sum_{\omega \in \Omega : \omega \models \Phi} \prod_{a \in \mathcal{P}(\omega)} w(\textit{Pred}(a)) \prod_{a \in \mathcal{N}(\omega)} \overline{w}(\textit{Pred}(a))
$$
where $\mathcal{P}(\omega)$ and $\mathcal{N}(\omega)$ denote the positive literals that are true and false in $\omega$, respectively, and $\textit{Pred}(a)$ denotes the predicate of $a$ (e.g. $\textit{Pred}(\textit{friends}(\textit{Alice},\textit{Bob})) = \textit{friends}$).
\end{definition}

To compute the partition function $Z$ using weighted model counting, we may proceed as in \cite{van2011lifted}. Let a set of weighted formulas $\Phi$ be given. Here, for simplicity of exposition, we will assume that the formulas in $\Phi$ do not contain constants (we refer to \cite{van2011lifted} for the general case). For every weighted formula $(\alpha_i,\lambda_i) \in \Phi$, where the free variables in $\alpha_i$ are exactly $x_1$, $\dots$, $x_k$, we create a new formula 
\begin{multline*}
    \forall x_1,\dots,x_k : \xi_i(x_1,\dots,x_k) \Leftrightarrow (\alpha_i(x_1,\dots,x_k) \wedge  \\
    x_1 \neq x_2 \wedge x_1 \neq x_3 \wedge \dots \wedge x_{k-1} \neq x_k)
\end{multline*}
%$$\forall x_1,\dots,x_k : \xi_i(x_1,\dots,x_k) \Leftrightarrow (\alpha_i \wedge x_1 \neq x_2 \wedge x_1 \neq x_3 \wedge \dots \wedge x_{k-1} \neq x_k)$$
where $\xi$ is a new fresh predicate. Then we set
$$w(\xi_i) = \exp{\left( \left( \begin{array}{c} |\Delta| \\ |\textit{Vars}(\alpha_i)| \end{array} \right)^{-1} \cdot (|\textit{Vars}(\alpha_i)|!)^{-1} \cdot \lambda_i \right)}$$ 
and $\overline{w}(\xi_i) = 1$ and for all other predicates we set both $w$ and $\overline{w}$ equal to 1. It is easy to check that then $WFOMC(\Phi,w,\overline{w}) = Z$, which is what we needed to compute. To compute the numerator of (\ref{eq:gradient}), we need to compute $WFOMC(\Phi \cup \{ \alpha_i \vartheta \},w,\overline{w})$ where $\vartheta$ is an injective grounding substitution of $\alpha_i$.

% \begin{example}
% \textcolor{red}{\todo{}}
% \end{example}

Importantly, there are classes of first-order logic theories for which weighted model counting is polynomial-time. In particular, as shown in \cite{van2014skolemization}, when the theory consists only of first-order logic sentences, each of which contains at most two logic variables, the weighted model count can be computed in time polynomial\footnote{Here, we should note that the runtime of these WFOMC algorithms depends on the parameters of the theory $\Phi$ exponentially. However, in many cases, these parameters are small compared to size of the domain.} in the number of elements in the domain $\Delta$ over which the set of possible worlds $\Omega$ is defined. This is not the case in general when the number of variables in the formulas is greater than two unless P = \#P$_1$~\cite{beame2015symmetric}.

\begin{remark}
It has already been shown in \cite{van.haaren.mlj} that gradients of log-likelihood of an MLN can be computed efficiently whenever WFOMC can be computed efficiently (in fact, the translation described in this section for computing $Z$ is essentially the same as the one described in \cite{van.haaren.mlj}). %Computing  %However, nothing has been shown yet about the difficulty of finding the solution of the optimization problem, which is what we study in this paper.
\end{remark}

% \subsection{Treating Constants}

% ....

\section{MARGINAL POLYTOPES}\label{sec:polytopes}

Not all possible values of formula statistics correspond to actual probability distributions. 

\begin{example}\label{ex:densities}
Let $\alpha = e(x_1,x_2)$, $\beta = e(x_1,x_2) \wedge e(x_2,x_3) \wedge e(x_3,x_1)$ and let $\Delta = \{ c_1, \dots, c_{100} \}$ be the set of domain elements and $\Omega$ be the respective set of possible worlds over the first-order language given by the predicate $e/2$ and the constants from $\Delta$. We can think of possible worlds $\omega \in \Omega$ as directed graphs (the predicate $e/2$ representing edges in the graph and the constants in $\Delta$ representing vertices). Then $Q_\omega(\alpha)$ corresponds to ``density'' of edges and $Q_\omega(\beta)$ to ``density'' of directed triangles. It is then easy to see why there is, for instance, no distribution with $\mathbb{E}[Q_\omega(\alpha)] = 0$ and $\mathbb{E}[Q_\omega(\beta)] = 0.5$ (since graphs with no edges obviously cannot have positive density of triangles).
\end{example}

The points corresponding to values of statistics that correspond to some actual probability distributions form what is called a {\em relational marginal polytope} \cite{kuzelka2018relational}.

\begin{definition}[Relational marginal polytope]
Let $\Omega$ be a set of possible worlds and $\Phi = (\alpha_1, \dots, \alpha_l)$ be a list of formulas. We define the relational marginal polytope $\textit{RMP}(\Phi,\Omega)$ w.r.t. $\Phi$ as 
\begin{multline*}
    \textit{RMP}(\Phi,\Omega) = \{ (x_1,\dots,x_l) \in R^l : \exists \mbox{ prob. distr. on } \\
     \Omega\; { s.t. }\;\mathbb{E}[Q_\omega(\alpha_1)] = x_1 \wedge \dots \wedge \mathbb{E}[Q_\omega(\alpha_l)] = x_l \}.
\end{multline*}
\end{definition}

\begin{remark}It is not difficult to see that the relational marginal polytope w.r.t.\ a given list of formulas $(\alpha_1$, $\dots$, $\alpha_l)$ can be equivalently defined as the convex hull of the set 
$\{ (Q_\omega(\alpha_1),\dots,Q_\omega(\alpha_l)) : \omega \in \Omega \}$.
\end{remark}

Next we define what it means for a point to be in the $\eta$-interior of a polytope.

\begin{definition}[Interiority]
Let $\eta > 0$, $\mathbf{P}$ be a polytope and $A^= \mathbf{x} = \mathbf{c}$ be the maximal linearly independent system of linear equations that hold for the vertices of $\mathbf{P}$. A point $\theta$ is said to be in the $\eta$-interior of $\mathbf{P}$ if 
$\{\theta' | A^= \theta' = \mathbf{c}, \| \theta' - \theta \| \leq \eta \} \subseteq \mathbf{P}. $
\end{definition}

\noindent The reason why we need to consider the system of linear equations $A^= \mathbf{x} = \mathbf{c}$ in the definition of interiority is because it may happen that the polytope lives in a lower dimensional subset of the given space. We note that interiority, as we defined it, is also often called {\em relative interiority} in the literature.

\begin{remark}
When we were constructing the dual relational marginal problem, we had to assume that there is a positive solution that satisfies the constraints of the primal problem. It is not difficult to see that if the vector of formula statistics' estimates $\theta$ is in the $\eta$-interior of the respective relational marginal polytope for some $\eta > 0$ then such a solution always exists. To see this, first, notice that an interior point $\theta$ can be repesented as a convex combination 
$\theta = \sum_{\mathbf{x} \in \{ (Q_\omega(\alpha_1),\dots,Q_\omega(\alpha_l)) : \omega \in \Omega \}} a_\mathbf{x} \cdot \mathbf{x}$
where $a_\mathbf{x} > 0$ for all $\mathbf{x} \in \{ (Q_\omega(\alpha_1),\dots,Q_\omega(\alpha_l)) : \omega \in \Omega \}$. To find a positive distribution over $\Omega$ that satisfies the constraints, we just need to assign positive probabilities $P_\omega$ so that 
$a_\mathbf{x} = \sum_{\omega \in \Omega : (Q_\omega(\alpha_1),\dots,Q_\omega(\alpha_l)) = \mathbf{x}} P_\omega$, which we can always do.
\end{remark}

\section{MAIN RESULTS}\label{sec:main}

In this section we describe our main technical result which is showing that maximum-likelihood weight learning of 2-variable MLNs can be done in time polynomial in the size of the domain (i.e.\ the problem is domain-liftable \cite{van2011lifted}). As already mentioned in the previous sections, it has been shown that computing log-likelihood and its derivatives is domain liftable \cite{van2011lifted,van.haaren.mlj} but it has not been shown what is the computational complexity of the complete weight learning problem. 

It turns out that it is natural to study the complexity of the weight learning  problem in the relational marginal setting because one of the parameters that influences runtime is interiority of the vectors which represent marginal constraints. In particular we have the following result which provides a polynomial-time bound for maximum likelihood weight learning of 2-variable MLNs.

\begin{theorem}\label{thm:main}
Let $\Phi = \{\alpha_1,\dots,\alpha_l\}$ be a set of quantifier-free first-order logic formulas, each with at most 2 variables. Let $\Phi_0$ be a set of universally quantified first-order logic sentences, each also with at most 2 variables. Let $\Omega_{\Phi_0}$ be the set of models of $\Phi_0$ over a given domain $\Delta$. Let $\widehat{\omega} \in \Omega$ be a training example. Then there is an algorithm which finds weights of the MLN $\mathcal{M}$ given by formulas $\Phi$ such that the log-likelihood of $\mathcal{M}$ given the training example $\widehat{\omega}$ is within $\varepsilon$ of the optimum. The algorithm runs in time polynomial in $|\Delta|$, $1/\varepsilon$ and $1/\eta$ where $\eta$ is the interiority of the vector $Q_{\widehat{\omega}}(\Phi)$ in the relational marginal polytope $\textit{RMP}(\Phi,\Omega_{\Phi_0})$. 
% Let $\Phi = \{\alpha_1,\dots,\alpha_l\}$ be a set of quantifier-free first-order logic formulas, each with at most 2 variables. Let $\Phi_0$ be a set of universally quantified first-order logic sentences, each also with at most 2 variables. Let $\Omega_{\Phi_0}$ be the set of models of $\Phi_0$ over a given domain $\Delta$. Finally, let $\eta > 0$ be a real number and $\theta = (\theta_1,\dots,\theta_l)$ be a point in the $\eta$-interior of the relational marginal polytope $\textit{RMP}(\Phi,\Omega_{\Phi_0})$. Then there exists an algorithm which finds a distribution over $\Omega_{\Phi_0}$, represented as an MLN, whose entropy is within $\varepsilon > 0$ of the maximum and which satisfies the marginal constraints $\mathbb{E}(Q_\omega(\Phi)) = \theta$. The runtime of this algorithm is polynomial in $|\Delta|$, $1/\eta$, $1/\varepsilon$ and the number of bits needed to represent $\theta$.
\end{theorem}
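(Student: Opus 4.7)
By the primal--dual correspondence established in Section~\ref{sec:rel_marg_problems}, maximum-likelihood estimation for the MLN $\mathcal{M}$ with formulas $\Phi$ on the training example $\widehat{\omega}$ is equivalent to the unconstrained maximization of the concave dual $L(\lambda)$ in (\ref{eq:dual}) with $\theta_i = Q_{\widehat{\omega}}(\alpha_i)$. The plan is to solve this by the ellipsoid algorithm (Theorem~\ref{th:ellipsoid}) applied to a bounding box $\{\lambda \in \mathbb{R}^l : \|\lambda\|_\infty \le R\}$, intersected with the affine subspace $K$ cut out by any linear equalities that hold identically for the vectors $(Q_\omega(\alpha_1),\dots,Q_\omega(\alpha_l))$ as $\omega$ ranges over $\Omega_{\Phi_0}$. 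The three ingredients to assemble are (i) a strong first-order oracle for $L$ that runs in time polynomial in $|\Delta|$, (ii) a radius $R$ that is polynomial in $1/\eta$ and $|\Delta|$, and (iii) a translation of the ellipsoid's relative-gap guarantee into an additive $\varepsilon$ bound on log-likelihood.

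Ingredient (i) is essentially already in place. Both $L(\lambda)$ and its gradient (\ref{eq:gradient}) reduce, via the construction of Section~\ref{sec:wfomc}, to WFOMCs over a theory built from $\Phi_0$, the auxiliary equivalences $\forall x_1,\dots,x_k : \xi_i(x_1,\dots,x_k) \Leftrightarrow \bigl(\alpha_i(x_1,\dots,x_k) \wedge \bigwedge_{j<m} x_j \neq x_m\bigr)$, and, for each coordinate of the gradient, one additional grounded atom. Because every formula in $\Phi_0 \cup \Phi$ is quantifier-free with at most two logical variables, the resulting theory is two-variable and by \cite{van2014skolemization} each WFOMC evaluates in time polynomial in $|\Delta|$, yielding a strong first-order oracle of the required complexity.

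Ingredient (ii) is what I expect to be the main obstacle. I would use the $\eta$-interiority of $\theta := Q_{\widehat{\omega}}(\Phi)$ in $\textit{RMP}(\Phi,\Omega_{\Phi_0})$ to show that the dual optimum is attained (within $K$) at some $\lambda^*$ with $\|\lambda^*\|_\infty \le R = \mathrm{poly}(1/\eta, |\Delta|)$. The intuition is that for large $\|\lambda\|$, the Gibbs distribution induced by $\lambda$ concentrates on possible worlds whose statistics vector lies on a face of $\textit{RMP}$ normal to $\lambda$, while interiority forces $\theta$ to sit at distance at least $\eta$ from every such face; hence $\nabla L(\lambda) = \theta - \mathbb{E}[Q_\omega(\Phi)]$ must retain a non-negligible component pointing inward, preventing $\lambda^*$ from escaping the box. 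Turning this intuition into a quantitative bound will likely require an explicit handle on the vertex set or the bit-complexity of the defining inequalities of $\textit{RMP}(\Phi,\Omega_{\Phi_0})$ for two-variable $\Phi$ (the material announced for Sections~\ref{sec:2var_polytopes} and \ref{sec:bounding_box}), combined with a separating-hyperplane or Fenchel-duality argument converting the $\eta$-margin into a norm bound on $\lambda^*$.

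For ingredient (iii), each partial derivative $\partial L/\partial \lambda_i = \theta_i - \mathbb{E}[Q_\omega(\alpha_i)]$ lies in $[-1,1]$, so $L$ is $O(\sqrt{l})$-Lipschitz on the box and $\max_{B_R \cap K} L - \min_{B_R \cap K} L \le O(R\sqrt{l})$. Choosing $\beta = \varepsilon/\Theta(R\sqrt{l})$ in Theorem~\ref{th:ellipsoid} and combining with the bounding-box guarantee $\max_{B_R \cap K} L \ge \sup_\lambda L(\lambda)$ from step (ii) yields $L(\widehat\lambda) \ge \sup_\lambda L(\lambda) - \varepsilon$. The number of oracle calls is polynomial in $l$, $\log R$ and $\log(1/\beta)$, and each oracle call costs time polynomial in $|\Delta|$, so the total runtime is polynomial in $|\Delta|$, $1/\varepsilon$ and $1/\eta$, as required.
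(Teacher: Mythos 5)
Your architecture matches the paper's exactly: WFOMC as a strong first-order oracle, the ellipsoid method over a box intersected with the affine hull of the statistics vectors, and a final accuracy/runtime accounting. Ingredients (i) and (iii) are essentially correct as you state them (the paper bounds $\max L - \min L$ by $(2l+1)\log|\Omega|/\eta$ directly rather than via a Lipschitz argument, but your version works equally well once $R$ is in hand).

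The genuine gap is your ingredient (ii), which you correctly identify as the main obstacle but then leave unproved: you say you ``would use'' interiority to get $\|\lambda^*\|_\infty \le \mathrm{poly}(1/\eta,|\Delta|)$ and that this ``will likely require'' a separating-hyperplane argument, but no quantitative bound is derived. This is the central technical content of the theorem; without it the choice of $R$ and hence the whole runtime claim is unsupported. The paper's resolution (Theorem~\ref{thm:bounding_box}) is cleaner than the concentration-on-faces picture you sketch: since the maximum entropy is at most $\log|\Omega|$, strong duality gives $-L(\lambda^*) \le \log|\Omega|$, which for each $\omega$ yields $\langle \lambda^*, Q_\omega(\Phi) - \theta\rangle \le \log|\Omega|$ and hence $\langle \lambda^*, \mathbf{x} - \theta\rangle \le \log|\Omega|$ for every $\mathbf{x}$ in the polytope (Lemma~\ref{lemma:omegabound}); because the $\eta$-ball around $\theta$ inside the affine hull is contained in the polytope, the rescaled point $\lambda^*/\log|\Omega| + \theta$ lies in the set $\tilde{\mathcal{Q}}(\theta)$ of Lemma~\ref{lemma:qqb}, which coincides with the $1/\eta$-ball, giving $\|\lambda^*\| \le \log|\Omega|/\eta = O(|\Delta|^{c'}/\eta)$. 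Note this requires first restricting to solutions with $A^=\lambda^* = 0$ (Lemma~\ref{lemma:nulllam}), since other optimal $\lambda$ can be unbounded. A secondary omission: you invoke the affine subspace $K$ but do not say how to compute it; the paper obtains $A^=$ by explicitly constructing the polynomially many vertices of $\textit{RMP}(\Phi,\Omega_{\Phi_0})$ for two-variable formulas (Section~\ref{sec:2var_polytopes} and the appendix), and this construction is itself a nontrivial part of the polynomial-time claim.
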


At first, one might perhaps wonder why the above result about maximum-likelihood estimation should depend on interiority of $Q_{\widehat{\omega}}(\Phi)$. Consider the following example: $\widehat{\omega}$ represents a complete directed graph (e.g.\ using binary relations $\textit{e}/2$) and $\Phi = \{ \textit{e}(x,y) \}$. Then $Q_{\widehat{\omega}}(\Phi) = (1)$ which is clearly on the boundary of the respective polytope (in this case the polytope is just a line segment). If we try to optimize likelihood of the MLN given by $\Phi$, the weight of the formula $\textit{e}(x,y)$ will tend to infinity which also means that the optimization algorithm will not be able to converge. Thus, some dependence on interiority is necessary.

While the case from the previous paragraph might be simple to spot, there are other more tricky cases where, at first, we might not be able to realize that the weights will have to be very large. For instance, consider MLNs given by two formulas, one for edge density and one for triangle density (as in Example \ref{ex:densities}). If the training example $\widehat{\omega}$ turned out to represent a graph close to an extremal graph (see e.g.\ \cite{bollobas2004extremal}), e.g.\ one having close to maximum possible density of triangles for the given density of edges, then the learned weights would again turn out to be very large, but this time because of a more subtle reason. Again, this is what $\eta$-interiority captures.

Finally, using Theorem \ref{thm:main}, the duality of relational marginal problems and maximum-likelihood estimation in MLNs and a lemma from \cite{singh2014entropy}, we can obtain the next result about complexity of the relational marginal problems.

\begin{theorem}\label{corollary:main}
Let $\Phi$, $\Phi_0$, $\Delta$ and $\Omega_{\Phi_0}$ be as in Theorem \ref{thm:main} (in particular, all formulas in $\Phi$ and $\Phi_0$ are still required to have at most 2 variables).
Let $\eta > 0$ be a real number and $\theta = (\theta_1,\dots,\theta_l)$ be a point in the $\eta$-interior of the relational marginal polytope $\textit{RMP}(\Phi,\Omega_{\Phi_0})$. Then there exists an algorithm which finds a distribution over $\Omega_{\Phi_0}$, represented as an MLN, whose entropy is within $\varepsilon > 0$ of the maximum and which satisfies the marginal constraints $\mathbb{E}(Q_\omega(\Phi)) = \theta$ within $\sqrt{\varepsilon}$. The runtime of this algorithm is polynomial in $|\Delta|$, $1/\eta$, $1/\varepsilon$ and the number of bits needed to represent $\theta$.
% Let $\Phi$, $\Phi_0$, $\Delta$ and $\Omega_{\Phi_0}$ be as in Theorem \ref{thm:main} (in particular, all formulas in $\Phi$ and $\Phi_0$ are still required to have at most 2 variables). Let $\widehat{\omega} \in \Omega$ be a training example. Then there is an algorithm which finds weights of the MLN $\mathcal{M}$ given by formulas $\Phi$ such that the log-likelihood of $\mathcal{M}$ given the training example $\widehat{\omega}$ is within $\varepsilon$ of the optimum. The algorithm runs in time polynomial in $|\Delta|$, $1/\varepsilon$ and $1/\eta$ where $\eta$ is the interiority of the vector $Q_{\widehat{\omega}}(\Phi)$ in the relational marginal polytope $\textit{RMP}(\Phi,\Omega_{\Phi_0})$. 
\end{theorem}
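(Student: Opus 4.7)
The plan is to reduce Theorem~\ref{corollary:main} to Theorem~\ref{thm:main} via the Lagrangian duality developed in Section~\ref{sec:rel_marg_problems}. Given the target statistic vector $\theta$ in the $\eta$-interior of $\textit{RMP}(\Phi,\Omega_{\Phi_0})$, the dual criterion~(\ref{eq:dual}) is $L(\lambda)=\sum_i\lambda_i\theta_i-\log Z(\lambda)$; its value and gradient depend only on $\theta$ and can be computed in time polynomial in $|\Delta|$ via the WFOMC machinery of Section~\ref{sec:wfomc}, whether or not $\theta$ arises from an actual possible world. The remark closing Section~\ref{sec:polytopes} shows that $\eta$-interiority yields a full-support feasible primal distribution, so Slater's condition holds and strong duality gives $\max H=-L(\lambda^{\star})$. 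The first step is therefore to invoke the algorithm underlying Theorem~\ref{thm:main} on $L$ itself with a tolerance $\varepsilon'$ to be chosen later, obtaining a weight vector $\widehat\lambda$ with $L(\widehat\lambda)\ge L(\lambda^{\star})-\varepsilon'$ lying in a bounding box $\|\widehat\lambda\|_\infty\le R$; by the interiority-driven box argument built for Theorem~\ref{thm:main} in Sections~\ref{sec:2var_polytopes}--\ref{sec:bounding_box}, $R$ is polynomial in $1/\eta$, $|\Delta|$ and the bit-length of $\theta$, and the whole step runs in time polynomial in these quantities and $1/\varepsilon'$.

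The second step reads off primal-side guarantees from $\widehat\lambda$. Substituting (\ref{eq:mln-as-sol}) into the definition of entropy yields the identity
\[
H(p_{\widehat\lambda}) \;=\; -L(\widehat\lambda) + \widehat\lambda^{\top}\bigl(\theta - \mathbb{E}_{p_{\widehat\lambda}}[Q_\omega(\Phi)]\bigr),
\]
and by (\ref{eq:gradient}) the parenthesised expression is exactly $\nabla L(\widehat\lambda)$. Hence both the entropy gap and the marginal violation reduce to controlling the dual gap together with $\|\nabla L(\widehat\lambda)\|$. The lemma from~\cite{singh2014entropy} cited in the theorem statement supplies the conversion: the Hessian of $\log Z$ is the covariance matrix of $Q_\omega(\Phi)\in[0,1]^l$, so $L$ is concave and $M$-smooth with $M$ polynomial in $l$, and co-coercivity gives $\|\nabla L(\widehat\lambda)\|^2\le 2M\bigl(L(\lambda^{\star})-L(\widehat\lambda)\bigr)\le 2M\varepsilon'$.

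Choosing $\varepsilon'$ a sufficiently small polynomial function of $\varepsilon$, $1/R$, $1/l$ and $1/M$ then makes the marginal violation at most $\sqrt{\varepsilon}$, and the displayed identity combined with $\|\widehat\lambda\|_\infty\le R$ makes the entropy gap at most $\varepsilon$. Since $1/\varepsilon'$ remains polynomial in the original parameters $|\Delta|$, $1/\eta$, $1/\varepsilon$ and the bit-length of $\theta$, the polynomial runtime of Theorem~\ref{thm:main} carries over to the whole procedure. The main obstacle is the coordinated bookkeeping in this final step: a single bounding box $R$ must simultaneously contain the dual optimum (so that Theorem~\ref{thm:main} is applicable at the tightened tolerance $\varepsilon'$) and yield a polynomial-in-parameters conversion from the dual gap $\varepsilon'$ to the two primal tolerances $\sqrt{\varepsilon}$ and $\varepsilon$. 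Once the bounding-box and smoothness estimates of Sections~\ref{sec:2var_polytopes}--\ref{sec:bounding_box} are assembled consistently, the polynomial-time claim closes, but it is the step that demands the most care.
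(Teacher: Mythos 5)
Your reduction to Theorem \ref{thm:main} via duality is the same as the paper's, but the dual-to-primal conversion step is genuinely different. The paper runs the Theorem \ref{thm:main} algorithm at tolerance $\varepsilon$ and then uses the identity $L(\lambda^*)-L(\widehat\lambda)=D_{KL}(p^*\|p_{\widehat\lambda})$ (Lemma \ref{lemma:vardist}, which is the actual content of the cited Lemma A.4 of \cite{singh2014entropy} --- it is a KL-divergence identity, not the smoothness statement you attribute to it), followed by Pinsker's inequality $\delta_{TV}(p^*,p_{\widehat\lambda})\le\sqrt{D_{KL}(p^*\|p_{\widehat\lambda})}$ and the bound $|\mathbb{E}_{p^*}[Q_\omega(\Phi)]-\mathbb{E}_{p_{\widehat\lambda}}[Q_\omega(\Phi)]|\le\delta_{TV}$, which yields the $\sqrt{\varepsilon}$ marginal guarantee directly, with no retuning of the tolerance. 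You instead bound the Hessian of $\log Z$ by the covariance of $Q_\omega(\Phi)\in[0,1]^l$ and apply co-coercivity to get $\|\nabla L(\widehat\lambda)\|^2\le 2M\varepsilon'$; this is valid (note that $A^=\nabla L(\lambda)=0$ for every $\lambda$, so the full gradient coincides with its projection onto the feasible subspace and the unconstrained co-coercivity bound applies), and it is more self-contained since it avoids Pinsker and the external lemma, at the cost of the $\varepsilon'=\mathrm{poly}(\varepsilon,1/R,1/l)$ bookkeeping. Your treatment of the entropy gap via $H(p_{\widehat\lambda})=-L(\widehat\lambda)+\langle\widehat\lambda,\nabla L(\widehat\lambda)\rangle$ is in fact more explicit than the paper's, which disposes of the entropy claim with a terse appeal to duality; your version makes visible the correction term $\langle\widehat\lambda,\nabla L(\widehat\lambda)\rangle$, controlled by $\|\widehat\lambda\|\le R\sqrt{l}$ with $R=\log|\Omega|/\eta$ from Theorem \ref{thm:bounding_box}, which is what forces the tightened tolerance. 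Both routes give the stated polynomial runtime.
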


\begin{remark}
We have omitted using the term ``domain-liftable'' \cite{van2011lifted} in the description of the above two results. Here is why. Suppose that we fix a vector $\theta$ and increase the domain size $|\Delta|$. It can happen that $\theta$ becomes much closer to the boundary of the polytope which means that the runtime may increase more than just polynomially with increasing $|\Delta|$ because interiority of the vector $\theta$ is one of the parameters governing the runtime. In fact, $\theta$ may end up being completely outside the polytope, rendering the problem unsolvable. One possible solution is to use interiority w.r.t.\ the polytope that we obtain as a limit for $|\Delta| \rightarrow \infty$. It follows from results in \cite{kuzelka2018relational} that polytopes over larger domains (but given by the same formulas $\Phi$) are subsets of polytopes over smaller domains (one can also obtain bounds on how much smaller the limit polytope will be compared to some polytope over a finite domain using Proposition 8 in \cite{kuzelka2018relational}). It follows that our results imply domain-liftability of the relational marginal problems for vectors $\theta$ that are in the interior of the respective limit polytopes (for $|\Delta| \rightarrow \infty$).
\end{remark}

We prove Theorem \ref{thm:main} and Theorem \ref{corollary:main} in the next sections.

\paragraph{Outline of the Proof: } First, we show how to construct relational marginal polytopes (which turn out to be needed by the algorithm) in Section \ref{sec:2var_polytopes}. Then, in Section \ref{sec:bounding_box}, following the approach from \cite{singh2014entropy} we bound the weights of the MLN which is a solution of the relational marginal problem. We finish the rest of the proofs in Sections \ref{sec:proof} and \ref{sec:proof:corollary}.

\section{POLYTOPES FOR 2-VARIABLE FORMULAS}\label{sec:2var_polytopes}

For our main result, a polynomial-time algorithm for solving relational marginal problems, we will need to be able to construct relational marginal polytopes in time polynomial in the size of the domain $\Delta$.
First, we may notice that the number of possible vectors of formulas' statistics given by a fixed set of formulas can be bounded by a polynomial in $\Delta$.

\begin{remark}
Let $\Phi = (\alpha_1,\dots,\alpha_l)$ and let $\Omega$ be a set of possible worlds over a domain $\Delta$. Let us define $\mathcal{K}(\Phi,\Omega) = \{ (Q_\omega(\alpha_1),\dots,Q_\omega(\alpha_l)) | \omega \in \Omega \}$. Then $|\mathcal{K}(\Phi,\Omega)| \leq \prod_{\alpha_i \in \Phi}(|\Delta|+1)^{|\textit{Vars}(\alpha_i)|}$, which is polynomial in $|\Delta|$.
\end{remark}

Since the relational marginal polytope $\textit{RMP}(\Phi,\Omega)$ is equal to the convex hull of $\mathcal{K}(\Phi,\Omega)$, the above remark also provides a polynomial bound for the number of its vertices.

The next proposition is a consequence of an algorithm that we describe in the appendix.

\begin{proposition}
Let $\Phi$ be a set of quantifier-free first-order logic formulas, each with at most 2 variables. Let $\Phi_0$ be a set of universally quantified first-order logic sentences, each also with at most 2 variables. Finally, let $\Omega_{\Phi_0}$ be the set of models of $\Phi_0$ over a given domain $\Delta$. Then the set of vertices of $\textit{RMP}(\Phi,\Omega_{\Phi_0})$ can be constructed in time polynomial in $|\Delta|$.
\end{proposition}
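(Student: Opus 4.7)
The approach is to reduce the problem to three subroutines: (i) enumerate a polynomial-size list of candidate count-vectors, (ii) decide for each candidate whether it is actually attained by some $\omega\in\Omega_{\Phi_0}$ using 2-variable WFOMC, and (iii) take the convex hull of the attained set. Since $\textit{RMP}(\Phi,\Omega_{\Phi_0})$ coincides with the convex hull of $\mathcal{K}(\Phi,\Omega_{\Phi_0})$ (as noted in Section \ref{sec:polytopes}), its vertex set is exactly the set of extreme points of this finite set, so it suffices to produce $\mathcal{K}(\Phi,\Omega_{\Phi_0})$ explicitly.

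For step (i), write $k_i=|\mathrm{Vars}(\alpha_i)|\le 2$. Because $N(\alpha_i,\omega)$ is an integer in $\{0,1,\ldots,|\Delta|^{k_i}\}$, every point of $\mathcal{K}(\Phi,\Omega_{\Phi_0})$ sits in a rescaled integer lattice of cardinality at most $\prod_i(|\Delta|+1)^{k_i}=O(|\Delta|^{2l})$, and I enumerate the corresponding candidate tuples $\mathbf{n}=(n_1,\ldots,n_l)$ directly.

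For step (ii), I introduce the fresh predicates $\xi_i$ of Section \ref{sec:wfomc} so that $\xi_i(\bar{x})$ captures the injective-grounding truth of $\alpha_i$, and I assign $\xi_i$ the symbolic weight $x_i$ while $\overline{w}\equiv 1$ and every other predicate weight is fixed to $1$. Under this weighting, the WFOMC of $\Phi_0$ extended by the $\xi_i$-definitions is the generating polynomial
\[ F(x_1,\ldots,x_l)=\sum_{\omega\in\Omega_{\Phi_0}}\prod_i x_i^{N(\alpha_i,\omega)}, \]
whose monomial support is exactly the set of realised count-vectors. Because $\Phi_0$ and each $\alpha_i$ have at most two variables, every numerical evaluation $F(c_1,\ldots,c_l)$ is computable in time polynomial in $|\Delta|$ by the lifted algorithms of \cite{van2011lifted,van2014skolemization,beame2015symmetric}. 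Since the degree of $F$ in each $x_i$ is bounded by $|\Delta|^{k_i}\le|\Delta|^2$, multivariate polynomial interpolation on a product grid of $O(|\Delta|^{2l})$ distinct nodes reconstructs every coefficient of $F$ in polynomial time; I read the support off the nonzero coefficients and rescale each coordinate by $\binom{|\Delta|}{k_i}^{-1}(k_i!)^{-1}$ to recover $\mathcal{K}(\Phi,\Omega_{\Phi_0})$ in statistics coordinates.

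Step (iii) is then a black-box convex-hull computation on $O(|\Delta|^{2l})$ explicit points in $\mathbb{R}^l$; with $l$ fixed this is polynomial, e.g.\ by testing extremality of each candidate with a single linear program. The step I expect to be the main obstacle is (ii): making the WFOMC-based feasibility test genuinely polynomial in $|\Delta|$. The symbolic-interpolation route above has polynomially many evaluation nodes and each evaluation is polynomial, but one must pick conditioning-friendly nodes so that the multivariate interpolation itself is algorithmically benign; a cleaner fallback, if interpolation becomes awkward, is to enforce the exact cardinality $N(\alpha_i,\omega)=n_i$ directly by conjoining $\Phi_0$ with a cardinality constraint on $\xi_i$ and invoking cardinality-constrained 2-variable WFOMC, which is also domain-lifted.
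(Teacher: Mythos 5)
Your proof is correct, but it takes a genuinely different route from the paper's. The paper's appendix constructs $\mathcal{K}(\Phi,\Omega_{\Phi_0})$ \emph{directly}: it partitions domain elements into cells (by which unary predicates they satisfy, as in \cite{beame2015symmetric}), observes that $Q_\omega(\alpha)$ for a 2-variable formula depends only on the cell sizes and on the counts $\#_\omega(B,q,r)$ of element pairs across cells carrying each subset $B$ of binary predicates, and therefore builds an explicit polynomial-size set of \emph{representative possible worlds} $\Omega_R^B$ realizing every achievable combination of these counts; it then evaluates the statistics on each representative and filters by $\Phi_0$. You instead enumerate candidate integer count-vectors and test realizability through the WFOMC oracle itself, by giving the $\xi_i$ symbolic weights so that the weighted model count becomes the generating polynomial $F(x_1,\dots,x_l)=\sum_{\omega}\prod_i x_i^{N(\alpha_i,\omega)}$, recovered by interpolation from polynomially many numeric WFOMC calls; the support of $F$ is exactly $\mathcal{K}$. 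Both arguments are sound. The paper's construction is self-contained and needs no symbolic machinery, but its combinatorial bookkeeping is delicate (e.g.\ one must track the binary facts in \emph{both} directions between a pair of elements) and its size bound is doubly exponential in the vocabulary. Your reduction is cleaner and more modular --- it uses lifted inference as a black box, and the interpolation is benign with exact rational arithmetic on integer nodes since the degrees, the number of grid points, and the bit-lengths of $F$'s values are all polynomial in $|\Delta|$ for fixed $l$; your cardinality-constraint fallback is essentially the same device in disguise. The one thing you leave slightly implicit, which the paper also glosses over, is the final extraction of the actual vertex subset from $\mathcal{K}$ (one LP feasibility test per point), and, like the paper, your argument assumes $\Phi$ and $\Phi_0$ are constant-free.
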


\section{BOUNDING BOX}\label{sec:bounding_box}

The main result described in this section is the following theorem which allows us to bound the magnitude of weights in MLNs that we obtain as solutions of relational marginal problems. This theorem is a relational counterpart of Theorem 2.7 from \cite{singh2014entropy}. The proof follows the steps of the respective proof from \cite{singh2014entropy} and most of the heavy-lifting has already been done there (however, we do need to generalize their results to our setting).

\begin{theorem}\label{thm:bounding_box}
Let $\Phi$ be a set of quantifier-free first-order logic formulas, let $\Omega$ be a set of possible worlds and $A^= \mathbf{x} = \mathbf{c}$ be a maximal system of linearly independent equations satisfied by the vertices of the relational marginal polytope $\mathbf{P}_R = \textit{RMP}(\Phi,\Omega)$. Let $\theta$ be a point in the $\eta$-interior of $\mathbf{P}_R$. Then there is an optimal solution  $\mathbf{\lambda}^*$ of the dual problem (\ref{eq:dual}) such that $A^= \mathbf{\lambda}^* = 0$ and any such solution satisfies $\|\mathbf{\lambda}^*\| \leq \log{|\Omega|}/\eta$. %Then any optimal solution  $\mathbf{\lambda}^*$ of the dual problem (\ref{eq:dual}) such that $A^= \mathbf{\lambda}^* = 0$ satisfies $\|\mathbf{\lambda}^*\| \leq \log{|\Omega|}/\eta$.
\end{theorem}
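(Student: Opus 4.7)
I would split the argument in two: first show that any maximizer of the dual (\ref{eq:dual}) can be chosen in the null space of $A^=$ without changing the objective; then bound the norm of any such maximizer using the $\eta$-interiority of $\theta$ together with suboptimality of $\lambda = 0$. Both halves parallel the proof of Theorem 2.7 in~\cite{singh2014entropy} but require recasting for the relational marginal setting.

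For the first half, decompose any $\lambda \in \mathbb{R}^l$ uniquely as $\lambda = \lambda_\perp + \lambda_\parallel$ with $\lambda_\perp \in \ker A^=$ and $\lambda_\parallel \in \operatorname{row}(A^=)$, so that $\lambda_\parallel = (A^=)^T \mu$ for some $\mu$. Every vertex $Q_\omega(\Phi)$ of $\mathbf{P}_R$ satisfies $A^= Q_\omega(\Phi) = \mathbf{c}$ by definition of $A^=$, and the same equality extends to $\theta$ since $\theta \in \mathbf{P}_R$. Hence $\lambda_\parallel^T Q_\omega(\Phi) = \mu^T \mathbf{c} = \lambda_\parallel^T \theta$, a constant independent of $\omega$. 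Substituting into (\ref{eq:dual}), both occurrences of $\mu^T \mathbf{c}$ cancel and we obtain $L(\lambda) = L(\lambda_\perp)$. Consequently, any optimum can be replaced by its null-space component to yield an optimum $\lambda^*$ with $A^= \lambda^* = 0$.

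For the second half, fix any such optimum $\lambda^*$ (the bound being trivial if $\lambda^* = 0$) and define $\theta' := \theta + \eta\,\lambda^*/\|\lambda^*\|$. Because $A^=\lambda^* = 0$ and $A^=\theta = \mathbf{c}$, we have $A^=\theta' = \mathbf{c}$; moreover $\|\theta' - \theta\| = \eta$, so $\theta' \in \mathbf{P}_R$ by $\eta$-interiority. Since $\mathbf{P}_R$ is the convex hull of $\{Q_\omega(\Phi) : \omega \in \Omega\}$, we can write $\theta' = \sum_\omega P'_\omega\,Q_\omega(\Phi)$ for some probability distribution $P'$ on $\Omega$. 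Taking the inner product with $\lambda^*$ gives $\lambda^{*T}\theta + \eta\|\lambda^*\| = \mathbb{E}_{P'}[f(\omega)]$, where $f(\omega) := \lambda^{*T} Q_\omega(\Phi)$; hence some $\omega^{**} \in \Omega$ achieves $f(\omega^{**}) \geq \lambda^{*T}\theta + \eta\|\lambda^*\|$. On the other hand, optimality of $\lambda^*$ yields $L(\lambda^*) \geq L(0) = -\log|\Omega|$, which rearranges to $\log \sum_\omega e^{f(\omega)} \leq \lambda^{*T}\theta + \log|\Omega|$, and the left-hand side trivially dominates $f(\omega^{**})$. Chaining the two inequalities cancels the $\lambda^{*T}\theta$ terms and gives $\eta\|\lambda^*\| \leq \log|\Omega|$.

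The main obstacle I anticipate is the affine bookkeeping: one must track that the maximal equation system $A^=\mathbf{x}=\mathbf{c}$ holds not merely at extreme points but at every $Q_\omega(\Phi) \in \mathbf{P}_R$ (immediate from the vertex representation) and at $\theta$ (immediate from the interiority definition), and that the decomposition $\lambda = \lambda_\perp + \lambda_\parallel$ genuinely leaves the objective invariant. Attainment of the dual optimum over the unbounded subspace $\{A^=\lambda = 0\}$ also requires a brief justification, but it follows from concavity and continuity of $L$ together with the a~priori norm bound above applied to any suboptimal sequence approaching the supremum, which confines the search to a compact ball.
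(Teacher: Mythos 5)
Your proposal is correct, and the second half takes a genuinely different route from the paper. The first half coincides with the paper's Lemma~\ref{lemma:nulllam}: you decompose $\lambda$ into its null-space and row-space components and use $A^= Q_\omega(\Phi) = \mathbf{c}$ to cancel the constant $\langle \mathbf{d},\mathbf{c}\rangle$ in both terms of $L$; the paper phrases this as $L(\lambda)=L(\lambda+(A^=)^T\mathbf{d})$, which is the same computation. For the norm bound, however, the paper proceeds via two imported ingredients: Lemma~\ref{lemma:omegabound}, which bounds $\langle\lambda^*,\mathbf{x}-\theta\rangle\le\log|\Omega|$ for \emph{all} $\mathbf{x}\in\mathbf{P}_R$ using strong duality and the entropy bound, and the polar-duality Lemma~\ref{lemma:qqb} (Lemma~5.2 of \cite{singh2014entropy}), applied to the shifted point $\tilde\lambda=\lambda^*/\log|\Omega|+\theta$ to conclude $\|\lambda^*/\log|\Omega|\|\le 1/\eta$. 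You instead test the linear functional $\langle\lambda^*,\cdot\rangle$ against the single interior point $\theta'=\theta+\eta\lambda^*/\|\lambda^*\|$, which $\eta$-interiority places in $\mathbf{P}_R$, and replace the strong-duality step by the elementary observation $L(\lambda^*)\ge L(0)=-\log|\Omega|$. This buys two things: the polar-duality lemma is avoided entirely, and since your argument uses only $A^=\lambda=0$ and $L(\lambda)\ge L(0)$ rather than full optimality, it yields the a~priori bound for every such near-optimal $\lambda$, which is exactly what makes your closing attainment argument (restriction to a compact ball) go through. The paper's route, by contrast, keeps the statement $\langle\lambda^*,\mathbf{x}-\theta\rangle\le\log|\Omega|$ for all $\mathbf{x}\in\mathbf{P}_R$ as a separately reusable lemma and stays closer to the template of \cite{singh2014entropy}. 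One small point worth making explicit in your write-up: $A^=\theta=\mathbf{c}$ should be justified from $\theta\in\mathbf{P}_R$ (as a convex combination of vertices), not from the interiority definition alone, which would be vacuously satisfied if the affine slice through $\theta$ were empty; you do note the former in your first half, so this is only a matter of where the justification is cited.
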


To prove this theorem we start with some lemmas.
In what follows, when $\Phi = (\alpha_1,\dots,\alpha_l)$ is a list of formulas, we will use the notation $Q_\omega(\Phi) \triangleq (Q_\omega(\alpha_1),\dots,Q_\omega(\alpha_l))$.

\begin{lemma}\footnote{This is a relational counterpart of Lemma 5.1 from \cite{singh2014entropy}.}\label{lemma:omegabound}
Let $\Phi = (\alpha_1,\dots,\alpha_l)$, $\mathbf{\theta} = (\theta_1,\dots,\theta_l)$ be a point in the $\eta$-interior of the relational marginal polytope $\mathbf{P}_R = \textit{RMP}(\Phi,\Omega)$ and let $\mathbf{\lambda}^* = (\lambda_1^*,\dots,\lambda_l^*)$ be the optimal solution to the dual problem (\ref{eq:dual}). Then for any $\mathbf{x} \in \mathbf{P}_R$: $\langle \mathbf{\lambda}^*,\mathbf{x}-\mathbf{\theta} \rangle \leq \log{|\Omega|}$.
\end{lemma}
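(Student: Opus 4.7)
The plan is to derive the bound by sandwiching $L(\lambda^{*})$ between two estimates: one that uses optimality of $\lambda^{*}$ compared to the trivial choice $\lambda = 0$, and one that uses the definition of the relational marginal polytope as a convex hull.

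First, I would observe that since $\lambda^{*}$ maximizes the dual criterion $L(\lambda) = \langle \lambda, \theta \rangle - \log \sum_{\omega \in \Omega} e^{\langle \lambda, Q_\omega(\Phi) \rangle}$, evaluating at $\lambda = 0$ gives the lower bound
\begin{equation*}
L(\lambda^{*}) \;\geq\; L(0) \;=\; 0 - \log |\Omega| \;=\; -\log |\Omega|.
\end{equation*}

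Next, I would upper-bound $L(\lambda^{*})$ in terms of an arbitrary point $\mathbf{x} \in \mathbf{P}_R$. Since $\mathbf{P}_R$ is, by the remark following its definition, the convex hull of $\{Q_\omega(\Phi) : \omega \in \Omega\}$, any $\mathbf{x} \in \mathbf{P}_R$ can be written as a convex combination of the $Q_\omega(\Phi)$'s, whence
\begin{equation*}
\langle \lambda^{*}, \mathbf{x} \rangle \;\leq\; \max_{\omega \in \Omega} \langle \lambda^{*}, Q_\omega(\Phi) \rangle \;\leq\; \log \sum_{\omega \in \Omega} e^{\langle \lambda^{*}, Q_\omega(\Phi) \rangle}.
\end{equation*}
Rearranging, this yields $L(\lambda^{*}) \leq \langle \lambda^{*}, \theta \rangle - \langle \lambda^{*}, \mathbf{x} \rangle = -\langle \lambda^{*}, \mathbf{x} - \theta \rangle$.

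Finally I would chain the two inequalities: $-\log|\Omega| \leq L(\lambda^{*}) \leq -\langle \lambda^{*}, \mathbf{x} - \theta \rangle$, which rearranges to the desired $\langle \lambda^{*}, \mathbf{x} - \theta \rangle \leq \log |\Omega|$. There is no real obstacle here — the proof is essentially a two-line manipulation, and in particular it does not even invoke the $\eta$-interiority of $\theta$ (that hypothesis will presumably be needed in the next step, to convert this polytope-wide linear bound into a norm bound on $\lambda^{*}$ by choosing $\mathbf{x}$ to be $\theta + \eta \cdot \lambda^{*}/\|\lambda^{*}\|$ within the affine hull determined by $A^{=}$).
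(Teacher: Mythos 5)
Your proof is correct and follows essentially the same route as the paper's: both establish $-L(\lambda^{*}) \le \log|\Omega|$, use the fact that $\log\sum_{\omega\in\Omega} e^{\langle \lambda^{*}, Q_\omega(\Phi)\rangle}$ dominates each individual $\langle \lambda^{*}, Q_\omega(\Phi)\rangle$, and then pass to an arbitrary $\mathbf{x}\in\mathbf{P}_R$ via a convex combination. The only (harmless, and if anything more elementary) difference is that you obtain $L(\lambda^{*})\ge -\log|\Omega|$ by comparing with $L(0)$, whereas the paper invokes strong duality together with the $\log|\Omega|$ entropy bound on the primal; your remark that $\eta$-interiority is not used in this lemma is likewise consistent with the paper.
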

\begin{proof}
The entropy of any distribution which is a solution of the relational marginal problem is bounded by $\log{|\Omega|}$, which is the entropy of the uniform distribution over $\Omega$. It follows from strong duality that $-L(\mathbf{\lambda}^*) \leq \log{|\Omega|}$ where $L(\mathbf{\lambda}^*)$ is defined in (\ref{eq:dual}). Hence 
\begin{multline*}
- L(\mathbf{\lambda}^*) = -\langle \mathbf{\lambda}^*, \mathbf{\theta} \rangle + \log{\sum_{\omega \in \Omega} e^{\langle \mathbf{\lambda}^*, Q_\omega(\Phi) \rangle}} \leq \log{|\Omega|}.
\end{multline*}
In particular, for every $\omega \in \Omega$: 
\begin{equation}\label{eq:lemma:aux1}
-\langle \mathbf{\lambda}^*, \mathbf{\theta} \rangle + \langle \mathbf{\lambda}^*, Q_\omega(\Phi) \rangle \leq \log{|\Omega|}.
\end{equation}
Since $x \in \mathbf{P}_R$, we can write it as a convex combination $x = \sum_{\omega \in \Omega} a_\omega \cdot Q_{\omega}(\Phi) $. Using (\ref{eq:lemma:aux1}) we obtain
$$\sum_{\omega \in \Omega}(- a_\omega \langle \mathbf{\lambda}^*, \mathbf{\theta} \rangle + a_\omega \langle \mathbf{\lambda}^*, Q_\omega(\Phi) \rangle) \leq \sum_{\omega \in \Omega} a_\omega \log{|\Omega|}. $$
Since $\sum_{\omega \in \Omega} a_\omega = 1$ (recall that we represented $\mathbf{x}$ as a {\em convex combination}), we obtain:
$\langle \mathbf{\lambda}^*,\mathbf{x}-\mathbf{\theta} \rangle \leq \log{|\Omega|}$.
\end{proof}

\begin{lemma}\footnote{This is a relational counterpart of Lemma 2.5 in \cite{singh2014entropy}.}\label{lemma:nulllam}
Let $A^= \mathbf{x} = \mathbf{c}$ be a maximal linearly-independent system of linear equations which are satisfied by all vertices of the relational marginal polytope $\mathbf{P}_R = \textit{RMP}(\Phi,\Omega)$. Then, for any $\mathbf{d} \in \mathbb{R}^{m}$ where $m$ is the column dimension of $A^=$, $L(\mathbf{\lambda}) = L(\mathbf{\lambda} + (A^=)^T \mathbf{d})$ where $L$ is as in (\ref{eq:dual}).
\end{lemma}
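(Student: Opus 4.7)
The proof is a direct computation, and the whole point is that $A^=$ takes a single constant value $\mathbf{c}$ on the entire polytope $\mathbf{P}_R$. First, I would record the two places where we need this: (i) $\mathbf{\theta}$, the right-hand side of the marginal constraints defining $L$ in (\ref{eq:dual}), lies in $\mathbf{P}_R$ and therefore satisfies $A^= \mathbf{\theta} = \mathbf{c}$; (ii) every $Q_\omega(\Phi)$ with $\omega \in \Omega$ lies in $\mathbf{P}_R$, since by the remark following the definition of the relational marginal polytope, $\mathbf{P}_R$ is the convex hull of $\{Q_\omega(\Phi) : \omega \in \Omega\}$, and an affine equation that holds at every vertex of a polytope holds at every point of it. Thus $A^= Q_\omega(\Phi) = \mathbf{c}$ for every $\omega \in \Omega$ as well.

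The calculation then proceeds by expanding $L(\mathbf{\lambda} + (A^=)^T \mathbf{d})$ using (\ref{eq:dual}) and tracking the contribution of the extra term. The linear part becomes
$$\langle \mathbf{\lambda} + (A^=)^T \mathbf{d}, \mathbf{\theta} \rangle = \langle \mathbf{\lambda}, \mathbf{\theta} \rangle + \mathbf{d}^T A^= \mathbf{\theta} = \langle \mathbf{\lambda}, \mathbf{\theta} \rangle + \mathbf{d}^T \mathbf{c}.$$
Inside the log-sum-exp, each exponent satisfies
$$\langle \mathbf{\lambda} + (A^=)^T \mathbf{d}, Q_\omega(\Phi) \rangle = \langle \mathbf{\lambda}, Q_\omega(\Phi) \rangle + \mathbf{d}^T A^= Q_\omega(\Phi) = \langle \mathbf{\lambda}, Q_\omega(\Phi) \rangle + \mathbf{d}^T \mathbf{c},$$
where the added $\mathbf{d}^T \mathbf{c}$ is a constant independent of $\omega$. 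Factoring $e^{\mathbf{d}^T \mathbf{c}}$ out of the sum inside the logarithm and applying $\log$ contributes an extra additive $\mathbf{d}^T \mathbf{c}$, which (with the minus sign in front of the $\log$) exactly cancels the $\mathbf{d}^T \mathbf{c}$ picked up by the linear term. Collecting what remains gives $L(\mathbf{\lambda} + (A^=)^T \mathbf{d}) = L(\mathbf{\lambda})$.

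I do not expect any genuine obstacle here; the argument is a one-line cancellation once the constancy of $A^= \mathbf{x}$ on $\mathbf{P}_R$ is noted. The only thing worth stating carefully is that maximality and linear independence of $A^= \mathbf{x} = \mathbf{c}$ are not actually used in this particular proof (they will be used elsewhere, e.g., to pin down a canonical representative in Theorem \ref{thm:bounding_box}); all the lemma needs is that every row of $A^=$ is an affine function taking a single value on every $Q_\omega(\Phi)$, which is guaranteed by the hypothesis.
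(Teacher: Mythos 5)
Your proof is correct and follows essentially the same route as the paper's: both reduce to the observation that $A^= Q_\omega(\Phi) = \mathbf{c}$ for every $\omega$ and $A^= \mathbf{\theta} = \mathbf{c}$ (the paper obtains the latter by writing $\mathbf{\theta}$ as a convex combination of the $Q_\omega(\Phi)$, which is what your membership of $\mathbf{\theta}$ in $\mathbf{P}_R$ amounts to), and then cancel the resulting $\langle \mathbf{d}, \mathbf{c}\rangle$ terms between the linear part and the log-sum-exp. Your closing remark that maximality and linear independence of the system are not needed here is also accurate.
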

\begin{proof}
First, for any $\omega \in \Omega$: $A^= Q_\omega(\Phi) = c$. Second we can write $\mathbf{\theta} = \sum_{\omega \in \Omega} a_\omega Q_\omega(\Phi)$, where $\sum_{\omega \in \Omega} a_\omega = 1$.

Next, we have
\begin{multline*}
    \langle \mathbf{\lambda}+(A^=)^T \mathbf{d}, \mathbf{\theta} \rangle = \langle \mathbf{\lambda}, \mathbf{\theta} \rangle + \langle (A^=)^T \mathbf{d}, \mathbf{\theta} \rangle \\
    = \langle \mathbf{\lambda}, \mathbf{\theta} \rangle + \sum_{\omega \in \Omega} a_\omega \langle (A^=)^T \mathbf{d}, Q_\omega(\Phi) \rangle \\
    = \langle \mathbf{\lambda}, \mathbf{\theta} \rangle + \sum_{\omega \in \Omega} a_\omega \langle \mathbf{d}, A^= Q_\omega(\Phi) \rangle = \langle \mathbf{\lambda}, \mathbf{\theta} \rangle + \langle \mathbf{d}, \mathbf{c} \rangle.
\end{multline*}
For the dual problem (\ref{eq:dual}), we have
\begin{multline*}
    L(\mathbf{\lambda} + (A^=)^T \mathbf{d}) = \langle \mathbf{\lambda}+(A^=)^T \mathbf{d}, \mathbf{\theta} \rangle \\
    - \log{\sum_{\omega \in \Omega} e^{\langle \mathbf{\lambda}+(A^=)^T \mathbf{d}, Q_\omega(\Phi) \rangle}} = \langle \mathbf{\lambda}, \mathbf{\theta} \rangle + \langle \mathbf{d}, \mathbf{c} \rangle \\
    - \log{\sum_{\omega \in \Omega} e^{\langle \mathbf{d}, \mathbf{c} \rangle+\langle \mathbf{\lambda}, Q_\omega(\Phi) \rangle}} \\
    = \langle \mathbf{\lambda}, \mathbf{\theta} \rangle - \log{\sum_{\omega \in \Omega} e^{\langle \mathbf{\lambda}, Q_\omega(\Phi) \rangle}} = L(\mathbf{\lambda}).
\end{multline*}
\end{proof}

Due to the above lemma and since $A^=$ represents the maximal set of linearly independent equalities satisfied by points of $\mathbf{P}_R$, we can restrict ourselves to $\mathbf{\lambda}$'s that satisfy $A^= \mathbf{\lambda} = 0$ in the search for the optimal solution of the dual problem (\ref{eq:dual}).

The next lemma, which we will also need for the proof of Theorem \ref{thm:bounding_box}, does not need to be adapted and can be used for our purposes as is; we refer to \cite{singh2014entropy} for proof.

\begin{lemma}[Lemma 5.2 in \cite{singh2014entropy}]\label{lemma:qqb}
Let $A^= \mathbf{x} = \mathbf{c}$ be a system of linear equations, $\mathbf{\theta} \in \mathbb{R}^m$ and $\eta \geq 0$. Let us define three sets $\mathcal{B}$, $\mathcal{Q}$ and $\tilde{\mathcal{Q}}$:
\begin{align*}
    \mathcal{B}(\mathbf{\theta}) &= \{ \mathbf{x} \in \mathbb{R}^m | A^= \mathbf{x} = \mathbf{c}, \|\mathbf{x} - \mathbf{\theta} \| \leq \eta \}, \\
    \mathcal{Q}(\mathbf{\theta}) &= \{ \mathbf{y} \in \mathbb{R}^m | A^= \mathbf{y} = \mathbf{c}, \| \mathbf{y} - \mathbf{\theta} \| \leq 1/\eta \}, \\
    \tilde{\mathcal{Q}}(\mathbf{\theta}) &= \{ \mathbf{z} \in \mathbb{R}^m | A^= \mathbf{z} = \mathbf{c}, \forall x \in \mathcal{B}(\mathbf{\theta}) : \\
    & \quad\quad\quad\quad\quad\quad\quad\quad\quad \langle \mathbf{z}-\mathbf{\theta}, \mathbf{x}-\mathbf{\theta} \rangle \leq 1 \}.
\end{align*}
Then $\mathcal{Q} = \tilde{\mathcal{Q}}$.
\end{lemma}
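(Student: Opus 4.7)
My plan is to reduce the lemma to the standard polar duality between Euclidean balls: once everything is translated by $-\theta$ and viewed inside the null space $N(A^=)$, the sets $\mathcal{B}(\theta) - \theta$ and $\mathcal{Q}(\theta) - \theta$ become the closed Euclidean balls of radii $\eta$ and $1/\eta$ within that subspace, and $\tilde{\mathcal{Q}}(\theta) - \theta$ becomes precisely the polar of the first of these inside the subspace.

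First I would observe that in the relevant regime the center $\theta$ itself satisfies $A^= \theta = \mathbf{c}$ --- in the intended application to Theorem~\ref{thm:bounding_box} this is automatic, since $\theta$ lies in the $\eta$-interior of $\mathbf{P}_R$ and $A^= \mathbf{x} = \mathbf{c}$ holds on all of $\mathbf{P}_R$; in any other situation $\mathcal{B}(\theta)$ would be empty and the statement degenerates. Granted this, $x-\theta \in N(A^=)$ for every $x \in \mathcal{B}(\theta)$, and the analogous statement holds for points of $\mathcal{Q}(\theta)$ and $\tilde{\mathcal{Q}}(\theta)$.

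Next I would dispatch the easy inclusion $\mathcal{Q}(\theta) \subseteq \tilde{\mathcal{Q}}(\theta)$ by Cauchy--Schwarz: for any $y \in \mathcal{Q}(\theta)$ and any $x \in \mathcal{B}(\theta)$,
\[
\langle y - \theta,\, x - \theta\rangle \;\leq\; \|y-\theta\|\,\|x-\theta\| \;\leq\; \tfrac{1}{\eta}\cdot \eta \;=\; 1,
\]
and together with $A^= y = \mathbf{c}$ this places $y$ in $\tilde{\mathcal{Q}}(\theta)$.

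For the reverse inclusion $\tilde{\mathcal{Q}}(\theta) \subseteq \mathcal{Q}(\theta)$, I would argue by contradiction. Given $z \in \tilde{\mathcal{Q}}(\theta)$, assume $\|z - \theta\| > 1/\eta$ and exhibit the witness
\[
x \;:=\; \theta + \eta\cdot\frac{z - \theta}{\|z - \theta\|}.
\]
Then $\|x-\theta\| = \eta$, and since $A^=(z-\theta) = \mathbf{0}$ one checks $A^= x = A^= \theta = \mathbf{c}$, so $x \in \mathcal{B}(\theta)$; but then $\langle z-\theta,\, x-\theta\rangle = \eta\,\|z-\theta\| > 1$, contradicting the defining inequality of $\tilde{\mathcal{Q}}(\theta)$. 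The only mildly delicate step --- and the main obstacle --- is precisely this verification that the normalized witness $x$ still lies in the prescribed affine subspace, which is exactly where the assumption $A^=\theta = \mathbf{c}$ is used. With that in hand, the rest of the proof is a direct unpacking of the polar duality of Euclidean balls.
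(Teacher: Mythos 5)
Your proof is correct, and there is nothing in the paper to compare it against: the authors explicitly decline to prove this lemma, importing it verbatim from \cite{singh2014entropy} (``we refer to \cite{singh2014entropy} for proof''). Your argument is the standard polar-duality one and both inclusions check out --- Cauchy--Schwarz gives $\mathcal{Q}(\theta)\subseteq\tilde{\mathcal{Q}}(\theta)$, and the rescaled witness $x=\theta+\eta(z-\theta)/\|z-\theta\|$, which lies in $\mathcal{B}(\theta)$ precisely because $A^=(z-\theta)=\mathbf{0}$ and $A^=\theta=\mathbf{c}$, gives the converse. One correction to a side remark, though: you assert that when $A^=\theta\neq\mathbf{c}$ the set $\mathcal{B}(\theta)$ is empty and the statement degenerates. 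That is not accurate. $\mathcal{B}(\theta)$ is nonempty whenever $\theta$ lies within distance $\eta$ of the affine subspace $\{\mathbf{x}: A^=\mathbf{x}=\mathbf{c}\}$, and in that regime the equality $\mathcal{Q}=\tilde{\mathcal{Q}}$ can genuinely fail (take $m=2$, the subspace a line, and $\theta$ at distance $t$ with $0<t<\min(\eta,1/\eta)$ and $\eta\neq 1$: the two sets become intervals of different lengths). So $A^=\theta=\mathbf{c}$ is a substantive hypothesis that must be read into the statement, not just a convenience for the witness construction; as you correctly note, it does hold in the only place the lemma is used, since in Theorem \ref{thm:bounding_box} the point $\theta$ lies in the relative $\eta$-interior of $\mathbf{P}_R$ and every point of $\mathbf{P}_R$ satisfies $A^=\mathbf{x}=\mathbf{c}$. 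With that hypothesis made explicit, your proof is complete and self-contained, which is arguably an improvement over the paper's bare citation.
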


We are now ready to prove Theorem \ref{thm:bounding_box}.

\begin{proof}[Proof of Theorem \ref{thm:bounding_box}]
Let $\mathbf{\lambda}^*$ be an optimal solution of the dual problem (\ref{eq:dual}) satisfying $A^= \mathbf{\lambda}^* = 0$. This can be chosen because of Lemma~\ref{lemma:nulllam}.
Let $\mathcal{Q}(\mathbf{\theta})$, $\tilde{\mathcal{Q}}(\mathbf{\theta})$ and $\mathcal{B}(\mathbf{\theta})$ be as in Lemma \ref{lemma:qqb}.
Let us define 
$$\tilde{\lambda} = \frac{\mathbf{\lambda}^*}{\log{|\Omega|}} + \mathbf{\theta}.$$
We will first show that $\tilde{\lambda} \in \tilde{\mathcal{Q}}(\mathbf{\theta})$. We have 
$$A^= \tilde{\lambda} = A^=  \frac{\mathbf{\lambda}^*}{\log{|\Omega|}} + A^= \mathbf{\theta} = A^= \theta = c.$$
Thus, $\tilde{\lambda} \in \mathbf{P}_R$. Let $x \in \mathcal{B}$. Then we have
$$\langle \tilde{\mathbf{\lambda}} - \mathbf{\theta}, \mathbf{x} - \mathbf{\theta} \rangle = \frac{\langle \mathbf{\lambda}^*, \mathbf{x} - \mathbf{\theta} \rangle}{\log{|\Omega|}} \leq \frac{\log{|\Omega|}}{\log{|\Omega|}} = 1$$
where the inequality follows from Lemma \ref{lemma:omegabound}. Thus $\tilde{\mathbf{\lambda}} \in \tilde{\mathcal{Q}}(\mathbf{\theta}) = \mathcal{Q}(\mathbf{\theta})$ by Lemma~\ref{lemma:qqb}. From the definition of $\mathcal{Q}(\mathbf{\theta})$, we have
$$1/\eta \geq \| \tilde{\mathbf{\lambda}} - \mathbf{\theta} \| = \left\| \frac{\mathbf{\lambda}^*}{\log{|\Omega|}} \right\|.$$
It follows that $\| \mathbf{\lambda}^* \| \leq \log{|\Omega|}/\eta$, finishing the proof.

\end{proof}

\section{PROOF OF THEOREM \ref{thm:main}}\label{sec:proof}

In this section we prove Theorem \ref{thm:main} by showing how to solve the dual problem (\ref{eq:dual}) using the ellipsoid algorithm. 
% \begin{note}
% Since the ellipsoid algorithm is introduced in Section \ref{sec:ellipsoid} for minimization of convex functions, we will replace maximization of the dual $L(\lambda)$ by minimization of $-L(\lambda)$.  
% \end{note}

First, in order to run the ellipsoid algorithm, we need a {\em first-order oracle}, i.e.\ we need a procedure to compute $L(\lambda)$ and $\nabla L(\lambda)$. This can be computed by WFOMC using the encoding from Section \ref{sec:wfomc}. In particular, as discussed in Section \ref{sec:wfomc}, when both $\Phi$ and $\Phi_0$ contain formulas with at most 2 variables, we can compute WFOMC in time polynomial in the size of the domain $|\Delta|$. Hence, in this case we will have a first-order oracle running in time polynomial in $|\Delta|$.

Second, since we have to search for solutions $\lambda^*$ satisfying $A^= \lambda^* = 0$, where the matrix $A^=$ is defined as in Section \ref{sec:bounding_box}, we need to be able to compute $A^=$. For the case when both $\Phi$ and $\Phi_0$ contain formulas with at most 2 variables, we can compute the set of vertices of the relational marginal polytope in time polynomial in $|\Delta|$ as discussed in Section \ref{sec:2var_polytopes}. Finding the matrix $A^=$ is then a straightforward linear algebraic problem. One can then show, using the fact that the number of vertices of the relational marginal polytope is polynomial in $|\Delta|$ and that the representation of these vertices is polynomial in $|\Delta|$ as well, that the number of bits needed to encode $A^=$ and $\mathbf{c}$ is also polynomial in $|\Delta|$. %\textcolor{red}{number of bits needed to represent it}

Since we have a first-order oracle and we also have means to compute the matrix $A^=$ and the vector $\mathbf{c}$ which together represent the constraints, we can run the ellipsoid algorithm. However, what remains to be shown is how long the ellipsoid algorithm will need to run in order to obtain a solution with value that is no more than $\varepsilon$ from the optimum. We do that next.

Using Theorem \ref{thm:bounding_box} and Theorem \ref{th:ellipsoid}, if we set $R = \log{|\Omega|}/\eta$ and
$$
\beta = - \frac{\varepsilon}{ \left(\min_{\lambda\in K,\|\lambda\|_\infty\le R} L(\lambda)-\max_{\lambda\in K,\|\lambda\|_\infty\le R} L(\lambda)\right)}    
$$
then the ellipsoid algorithm will find a solution of the dual problem (\ref{eq:dual}) with value within $\varepsilon$ from the optimum in time polynomial in $\log{R}$, $l$ and $\log{(1/\beta)}$.

Hence we need to bound $\beta$. First, since $L(\lambda) \leq 0$, we can just focus on bounding $\min_{\lambda\in K,\|\lambda\|_\infty\le R} L(\lambda)$. We have
\begin{multline*}
    -L(\lambda) = -\langle \lambda,\theta \rangle + \log{\sum_{\omega \in \Omega} e^{\langle \mathbf{\lambda}, Q_\omega(\Phi) \rangle}} \\
    \leq |\langle \lambda,\theta \rangle| + \left|\log{\sum_{\omega \in \Omega} e^{\langle \mathbf{\lambda}, Q_\omega(\Phi) \rangle}}\right|
    \leq l \frac{\log{|\Omega|}}{\eta} \\ + \log{\left( |\Omega| \cdot \exp{\left( l \frac{\log{|\Omega|}}{\eta} \right)} \right)} 
    \leq (2l+1) \frac{\log{|\Omega|}}{\eta}.
\end{multline*}
Hence, $L(\omega) \geq -(2l+1) \frac{\log{|\Omega|}}{\eta}$ and 
$\beta \geq \frac{\varepsilon \eta}{ (2l+1) \log{|\Omega|}}.$
It follows that the number of WFOMC calls which the ellipsoid algorithm needs to run is polynomial in $\log{(\log{|\Omega|}/\eta)}$, $\log{( (2l+1) \log{|\Omega|}/ (\varepsilon \eta))}$ and $l$. Finally, noting that each of these calls can be performed in time polynomial in $|\Delta|^c$ and $\log{|\Omega|}/\eta$ (recall that $\log{|\Omega|}/\eta$ defines the bounding box where we need to search) and that $\log{|\Omega|} = O(|\Delta|^{c'})$ finishes the proof (here the constant $c$ depends on $\Phi$ and $\Phi_0$ and the constant $c'$ depends on the given first-order language~$\mathcal{L}$). \qed

% \section{RELATED WORK}

\section{PROOF OF THEOREM \ref{corollary:main}}\label{sec:proof:corollary}

Here we prove Theorem \ref{corollary:main}. For that we also need the following lemma, which is just a reformulation of Lemma A.4 from \cite{singh2014entropy} using our notation.

\begin{lemma}\label{lemma:vardist}
Let $\lambda^*$ be an optimal solution of the dual problem (\ref{eq:dual}) and let $\lambda$ be such that $L(\lambda) \geq L(\lambda^*)-\varepsilon$. Then 
$$L(\lambda^*)-L(\lambda) = D_{KL}(p^*||p) \leq \varepsilon$$
where $p^*$ is the MLN given by the formulas from $\Phi$ with weights $\lambda^*$ and $p$ is the MLN given by the same formulas $\Phi$ with weights $\lambda$.
\end{lemma}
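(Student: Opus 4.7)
The plan is to prove the identity $L(\lambda^*) - L(\lambda) = D_{KL}(p^* \| p)$ by direct calculation, and then observe that the bound $\leq \varepsilon$ is immediate from the hypothesis $L(\lambda) \geq L(\lambda^*) - \varepsilon$.

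Writing $Z(\lambda) = \sum_{\omega \in \Omega} e^{\langle \lambda, Q_\omega(\Phi) \rangle}$, the MLN density is $p_\lambda(\omega) = e^{\langle \lambda, Q_\omega(\Phi) \rangle}/Z(\lambda)$, so log-linearity of the model gives
\[
\log \frac{p^*(\omega)}{p(\omega)} = \langle \lambda^* - \lambda, Q_\omega(\Phi) \rangle - \log Z(\lambda^*) + \log Z(\lambda).
\]
Taking expectation with respect to $p^*$ and using the definition of $L$ from (\ref{eq:dual}), I get
\[
D_{KL}(p^* \| p) = \langle \lambda^* - \lambda, \mathbb{E}_{p^*}[Q_\omega(\Phi)] \rangle - \log Z(\lambda^*) + \log Z(\lambda).
\]
So the whole proof reduces to showing $\mathbb{E}_{p^*}[Q_\omega(\alpha_i)] = \theta_i$ for every $i$, because then the right-hand side becomes $\langle \lambda^*, \theta\rangle - \log Z(\lambda^*) - (\langle \lambda, \theta\rangle - \log Z(\lambda)) = L(\lambda^*) - L(\lambda)$.

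The moment-matching identity $\mathbb{E}_{p^*}[Q_\omega(\alpha_i)] = \theta_i$ is precisely the vanishing of $\partial L/\partial \lambda_i$ at $\lambda^*$, as recorded in (\ref{eq:gradient}). The one subtlety I need to handle is that $\lambda^*$ is optimal under the constraint $A^= \lambda^* = 0$, so a priori first-order optimality only gives $\nabla L(\lambda^*)$ orthogonal to $\ker A^=$, i.e., in the row space of $A^=$. However, Lemma \ref{lemma:nulllam} states that $L$ is invariant under adding any vector of the form $(A^=)^T d$; differentiating this invariance in $d$ shows $A^= \nabla L(\lambda^*) = 0$, so $\nabla L(\lambda^*) \in \ker A^=$. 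Since $\ker A^=$ and the row space of $A^=$ intersect only in $\{0\}$, we conclude $\nabla L(\lambda^*) = 0$ unconditionally, which supplies the needed identity componentwise.

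Substituting back then yields the claimed equality $L(\lambda^*) - L(\lambda) = D_{KL}(p^* \| p)$, and the hypothesis $L(\lambda) \geq L(\lambda^*) - \varepsilon$ immediately gives $D_{KL}(p^* \| p) \leq \varepsilon$. The only mildly nontrivial step is the optimality argument just described; the rest is bookkeeping with the log-linear form of the MLN and the definition of $L$.
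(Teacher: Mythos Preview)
Your proof is correct. The paper itself does not prove this lemma; it merely records it as a reformulation of Lemma~A.4 from \cite{singh2014entropy}. Your direct calculation---expanding $\log(p^*/p)$ via the log-linear form, taking expectation under $p^*$, and invoking the moment-matching identity $\mathbb{E}_{p^*}[Q_\omega(\Phi)] = \theta$---is the standard exponential-family argument for this identity, and what one would find in the cited source.

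One minor remark: the dual problem~(\ref{eq:dual}) as stated in Section~\ref{sec:rel_marg_problems} is unconstrained over $\mathbb{R}^l$, so if $\lambda^*$ is a global maximizer then $\nabla L(\lambda^*)=0$ follows immediately from first-order optimality, and moment matching is then a direct consequence of~(\ref{eq:gradient}). Your constrained-optimality detour (via Lemma~\ref{lemma:nulllam} and the $\ker A^= \cap \operatorname{row}(A^=) = \{0\}$ argument) is therefore not strictly needed for the lemma as stated. It is nonetheless correct, and it establishes the slightly stronger fact that any $\lambda^*$ maximizing $L$ merely on the subspace $\{A^=\lambda = 0\}$ already satisfies $\nabla L(\lambda^*)=0$ globally---which is exactly the kind of $\lambda^*$ the ellipsoid algorithm in Section~\ref{sec:proof} produces, so the extra care is well placed.
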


Next from Pinsker's inequality we have $\delta_{TV}(p^*,p) \leq \sqrt{D_{KL}(p^*||p)}$ where $\delta_{TV}(p^*,p)$ denotes the total variation distance of $p^*$ and $p$ and $p$ and $p^*$ are as in Lemma \ref{lemma:vardist}. Finally, realizing that $|\mathbb{E}_{\omega \sim p^*}[Q_\omega(\Phi)] - \mathbb{E}_{\omega \sim p}[Q_\omega(\Phi)]| \leq \delta_{TV}(p^*,p)$ together with the result in Theorem \ref{thm:main} and with the duality finishes the proof of Theorem \ref{corollary:main}. \qed

\section{CONCLUSIONS}\label{sec:conclusions}

We have proved that maximum-likelihood weight learning of MLNs given by formulas with at most 2 variables can be solved in time polynomial in the size of the domain $\Delta$. In order to obtain this result, we framed the learning problem as a relational marginal problem which allowed us to exploit algorithmic techniques from \cite{singh2014entropy}. Some of the new results that we obtained in this paper hold for general MLNs, not just the 2-variable ones. For instance, Theorem \ref{thm:bounding_box} holds for all MLNs. The bounds on the number of steps of the ellipsoid algorithm following from the results in Sections \ref{sec:proof} and \ref{sec:proof:corollary} hold for general MLNs as well. We believe that not only the result but also the techniques could be useful for SRL.

We should also stress here that the algorithm described in this paper is meant mostly for theoretical purposes; it is not the most practical one. A more practical algorithm could be obtained if we replaced the ellipsoid algorithm by the projected gradient descent algorithm and designed a more practical variant of the algorithm for construction of relational marginal polytopes. 

\noindent {\bf Acknowledgments} A significant part of this work was done while OK was with KU Leuven, supported by Research Foundation - Flanders (project G.0428.15). OK and VK were supported by the OP VVV project {\it CZ.02.1.01/0.0/0.0/16\_019/0000765} ``Research Center for Informatics''.

\fi %end of ifmaintext

\ifappendix
\appendix

\section{COMPUTING POLYTOPES FOR 2-VARIABLE FORMULAS}

In this section we describe an algorithm for constructing relational marginal polytopes given by sets of first-order formulas, each with at most 2 logical variables. 
The algorithm described in this section is largely inspired by the WFOMC algorithm from \cite{beame2015symmetric}. 
In what follows in this section, we will denote by $\Omega_{\Phi_0}$ the set of possible worlds over domain $\Delta= \{c_1,\dots, c_{|\Delta|} \}$ which satisfy a given set $\Phi_0$ of universally quantified first-order logic sentences.\footnote{Existential quantifiers can be treated using a form of Skolemization we omit the details here.}

We need an algorithm which can compute the set $\mathcal{K}(\Phi,\Omega_{\Phi_0})$ defined in Section 5. %\ref{sec:2var_polytopes}. 
Let $\mathcal{U}$ be the set of all unary predicates in the considered first-order language $\mathcal{L}$ and $\mathcal{B}$ be the set of all binary predicates (for 2-variable formulas, we may assume w.l.o.g.\footnote{We refer to \cite{beame2015symmetric} for details.}\ that $\mathcal{L}$ does not contain any literals of arity higher than $2$). In the following, we will use the notion of {\em cells}, which was also used in \cite{beame2015symmetric}. Given a possible world $\omega$, we say that two constants $c, c' \in \Delta$ are in the same {\em cell} if for all $u \in \mathcal{U}$ we have $\omega \models u(c)$ iff $\omega \models u(c')$; each cell can then be identified by a subset of $\mathcal{U}$ naturally.

%For every subset of unary predicates $U \in 2^\mathcal{U}$, we define an indicator function $f_{U,\omega} : \Delta \rightarrow \{0,1\}$ such that $f_{U,\omega}(c) = 1$ if $\omega \models \bigwedge_{R \in U} R(c) \wedge \bigwedge_{R' \in \mathcal{U}\setminus U} \neg R'(c)$ and $f_{U,\omega}(c) = 0$ otherwise. We denote $F_{U}(\omega) = \{ c \in \Delta | f_{U,\omega}(c) = 1 \}$. Then the set $\{ F_U(\omega) | U \in 2^\mathcal{U} \}$ is a partition of the domain $\Delta$. If we only have 

\begin{remark}\label{remark:omegar}
Suppose that $\mathcal{B} = \emptyset$ (i.e.\ we only have unary predicates) and that $\Phi_0$ and $\Phi$ are constant-free. Then we can construct the set $\mathcal{K}(\Phi,\Omega_{\Phi_0})$ in polynomial time as follows. First, we construct an auxiliary set of all integer partitions of $|\Delta|$:
$$\mathcal{J} = \left\{ (j_1,\dots,j_{|2^{\mathcal{U}}|}) \left| \sum_{k=1}^{|2^{\mathcal{U}}|} j_k = |\Delta| \wedge \forall k : j_k \geq 0 \right. \right\} $$
The intention is that the $i$-th entry of a vector $J \in \mathcal{J}$ should represent the number of constants $c \in \Delta$ that are in the $i$-th cell (here the cells will be ordered arbitrarily in some order).
We can then use the set $\mathcal{J}$ to define a set of possible worlds $\Omega_R \subseteq \Omega_{\Phi_0}$ which will be representative of all the possible worlds in the sense that $\mathcal{K}(\Phi,\Omega_{\Phi_0}) = \{ (Q_\omega(\alpha_1),\dots,Q_\omega(\alpha_l)) | \omega \in \Omega_R \}$. We define the set $\Omega_R$ as follows. First we order (arbitrarily) the constants in $\Delta$ and we do the same with the sets in $2^\mathcal{U}$; we denote by $c_i$ the $i$-th constant and similarly, by $U_i$, the $i$-th subset of $\mathcal{U}$.
For every $J = (j_1, \dots, j_{|2^{\mathcal{U}}|}) \in \mathcal{J}$ we construct:
\begin{multline*}
  \omega_J = \bigcup_{i = 1}^{j_1} \bigcup_{R \in U_1} \left\{ R(c_{i}) \right\} \cup \bigcup_{i = j_1+1}^{j_1+j_2} \bigcup_{R \in U_2} \left\{ R(c_{i}) \right\} \cup \dots \\
  \dots \cup \bigcup_{i = j_1 + \dots + j_{\left|2^\mathcal{U}\right|-1} +1}^{|\Delta|} \bigcup_{R \in U_{\left|2^\mathcal{U}\right|}} \left\{ R(c_{i}) \right\}
\end{multline*}
Then we define $\Omega_R = \{ \omega_J | J \in \mathcal{J} \}$. Notice that $|\Omega_R|$ is polynomial in $|\Delta|$. Finally, it is easy to show that we can do the following in polynomial time (i.e.\ polynomial in $|\Delta|$): (i) to filter out possible worlds that do not satisfy $\Phi_0$ and (ii) to compute $(Q_{\omega}(\alpha_1),\dots,Q_\omega(\alpha_l))$.
\end{remark}

In the next example we illustrate the construction from the above remark.

\begin{example}
Let $\mathcal{U} = \{ \textit{sm}/1 \}$ and $\Delta = \{ \textit{Alice}, \textit{Bob} \}$. Then $\mathcal{J} = \{ (0,2), (1,1), (2,0) \}$. Now, for every $J \in \mathcal{J}$, we need to construct the respective $\omega_J$. That is, for the ordering of constants $\textit{Alice} \prec \textit{Bob}$ and the ordering of cells $\emptyset \prec \{ \textit{sm}/1 \}$, we have:
\begin{align*}
    \omega_{(0,2)} &= \{ \textit{sm}(\textit{Alice}), \textit{sm}(\textit{Bob}) \}, \\
    \omega_{(1,1)} &= \{ \textit{sm}(\textit{Bob}) \}, \\
    \omega_{(2,0)} &= \emptyset.
\end{align*}
The set of representative possible worlds is $\Omega_R = \{ \omega_{(0,2)}, \omega_{(1,1)}, \omega_{(2,0)} \}$.
\end{example}

We now need to explain how to compute the set $\mathcal{K}(\Phi, \Omega_{\Phi_0})$ for the case when $\mathcal{B} \neq \emptyset$. 
We again show how to construct the set of representative possible worlds but this time also with binary predicates; we denote this set $\Omega_R^B$. We will explain how to construct representatives by extending one possible world $\omega_0 \in \Omega_R$, constructed as in Remark \ref{remark:omegar}. Hence, obviously the same procedure will need to be repeated for all possible worlds from $\Omega_R$.

\begin{remark}\label{remark:simplebinary}
%Let $\omega_J \in \Omega_R$ be a possible world where $J \in 2^\mathcal{J}$ and $\Omega_R$ are as in Remark \ref{remark:omegar}. 
First, we consider literals of the form $R(c,c)$ where $R \in \mathcal{B}$ and $c \in \Delta$. We can notice that these literals can be added already in the construction of $\Omega_R$ (using auxiliary unary predicates), so we will not consider this type of literals here further. 
\end{remark}

The next remark will provide us with a simple way to construct the set of representatives. 

\begin{remark}\label{remark:binary}
Let us suppose that the possible world $\omega_J$, where $J = (j_1,\dots,j_{\left|2^\mathcal{J}\right|}) \in 2^\mathcal{J}$, is as in Remark \ref{remark:omegar}. We first discuss how we could generate all possible worlds that could be obtained from $\omega_J$. Let $\Delta_q = \{ c_{\sum_{k = 1}^{q-1} j_k + 1}, \dots, c_{\sum_{k = 1}^{q} j_k} \},$ and
$\Delta_r = \{ c_{\sum_{k = 1}^{r-1} j_k + 1}, \dots, c_{\sum_{k = 1}^{r} j_k} \}.$ 
Next we could assign a subset of binary predicates $\mathcal{B}$ to each element of the set $\{ (c,c') \in (\Delta_q \times \Delta_r) | c \neq c' \}$ (note that the condition $c \neq c'$ is only relevant for $r = q$ and note that we have already taken care of literals of the form $R(c,c)$). If for instance, $(c_1,c_2)$ got assigned the predicates $\textit{friends}$, $\textit{teammates}$ then we would include the literals $\textit{friends}(c_1,c_2)$ and $\textit{teammates}(c_1,c_2)$ to the constructed possible world, and analogically for all the other tuples. %Using this procedure for all $\Delta_q$, $\Delta_r$ we could eventually construct all possible worlds. 
Finally, let us define $\#(B, q, r)$ to be the number of pairs of domain elements from $\Delta_q \times \Delta_r$ which are assigned the subset of binary predicates $B \in 2^\mathcal{B}$. We may notice that $Q_{\omega}(\alpha)$ for any 2-variable  quantifier-free formula $\alpha$ will only depend on the numbers $\#_\omega(B, q, r)$ but not on any other details of the possible worlds. The same also holds for the 2-variable universally quantified formulas in $\Phi_0$. Hence, we can construct only representatives with distinct $\#_\omega(B, q, r)$'s using a straightforward generalization of the procedure from Remark \ref{remark:omegar}.
\end{remark}

Finally, we need to show that the number of representatives in the set constructed according to Remark \ref{remark:binary} has size polynomial in $|\Delta|$. Using Remarks \ref{remark:omegar}, \ref{remark:simplebinary} and \ref{remark:binary}, we can obtain the rather crude upper bound:
$$|\Omega_R^B| \leq (|\Delta|+1)^{2^{|\mathcal{U}|+|\mathcal{B}|}} \cdot {(|\Delta|+1)^{2 \cdot 4^{|\mathcal{U}|+|\mathcal{B}|} \cdot 2^{|\mathcal{B}|}}}.$$
Here, the first part comes from Remarks \ref{remark:omegar} and \ref{remark:simplebinary} and the second part from Remark \ref{remark:binary}. Importantly, the bound is polynomial in $|\Delta|$. %\footnote{It is true that this polynomial will often have too high degree to be practical, however, the same is also true for the WFOMC algorithm from \cite{beame2015symmetric}, which we also need.} 
Since our main aim in this paper is establishing existence of polynomial-time algorithms for weight learning, we will not try to optimize this bound. In practice, one could probably find the vertices defining the polytope faster using a generic SAT solver as an oracle inside a heuristic algorithm iteratively traversing vertices of the polytope, but that would not lead to an algorithm with runtime polynomial in the size of the domain.

\fi

\bibliographystyle{abbrv}
\bibliography{ref}

\end{document}